\newif\iffullversion
\def\@copyrightspace{\relax}
\def\@mkbibcitation{\relax}
\newcommand\appendixref{Appendix}
\newcommand\appendixref{full version of this paper~\cite{freegapinfo}}
\newif\ifshowdiff
\newcommand{\hl}[1]{{\color{blue} #1}}
\newcommand{\hl}[1]{{\color{defaultcolor} #1}}
\pgfplotsset{compat=newest}
\newcommand{\abs}[1]{\lvert#1\rvert}%
\newcommand{\norm}[1]{\lVert#1\rVert}%
\newcommand{\onenorm}[1]{\lVert#1\rVert_1}%
\newcommand{\set}[1]{\{#1\}}
\newcommand{\ZZ}{\mathbb{Z}}
\newcommand{\RR}{\mathbb{R}}
\newcommand{\PP}{\mathbb{P}}
\newcommand{\cI}{\mathcal{I}}
\newcommand{\cJ}{\mathcal{J}}
\newcommand{\io}{\mathcal{I}_\omega}
\newcommand{\ioc}{\mathcal{I}_\omega^c}
\newcommand{\jo}{\mathcal{J}_\omega}
\DeclareMathOperator{\lap}{Lap}
\DeclareMathOperator{\var}{Var}
\DeclareMathOperator{\len}{len}
\DeclareMathOperator{\cost}{cost}
\DeclareMathOperator{\otag}{tag}
\newcommand{\noisymax}{Noisy Max\xspace}
\newcommand{\gapmax}{Noisy-Max-with-Gap\xspace}
\newcommand{\topk}{Noisy Top-K\xspace}
\newcommand{\gaptopk}{Noisy-Top-K-with-Gap\xspace}
\newcommand{\topkmeasure}{Noisy-Top-K-with-Gap with Measures\xspace}
\newcommand{\svt}{Sparse Vector\xspace}
\newcommand{\gapsvt}{Sparse-Vector-with-Gap\xspace}
\newcommand{\adaptivesvt}{Adaptive-Sparse-Vector-with-Gap\xspace}
\newcommand{\svtmeasure}{Sparse-Vector-with-Gap with Measures\xspace}
\newcommand{\funcgaptopk}{NoisyTopK\xspace}
\newcommand{\functopkmeasure}{NoisyTopKMeasures\xspace}
\newcommand{\funcadaptivesvt}{AdaptiveSparseVector\xspace}
\newcommand{\vect}[1]{\boldsymbol{#1}}
\newcommand{\vq}{\vect{q}}
\newcommand{\vg}{\vect{g}}
\newcommand{\valpha}{\vect{\alpha}}
\newcommand{\vbeta}{\vect{\beta}}
\newcommand{\vxi}{\vect{\xi}}
\newcommand{\veta}{\vect{\eta}}
\newcommand{\vtheta}{\vect{\theta}}
\newcommand{\sdetemplate}[3]{S_{#1 #2:#3}}
\DeclareMathOperator{\sde}{\sdetemplate{}{D}{E}}
\DeclareMathOperator{\sdep}{\sdetemplate{}{D^\prime}{E}}
\DeclareMathOperator{\sdei}{\sdetemplate{}{D}{E_i}}
\DeclareMathOperator{\sdeip}{\sdetemplate{}{D^\prime}{E_i}}
\DeclareMathOperator{\sdo}{\sdetemplate{}{D}{\omega}}
\DeclareMathOperator{\sdop}{\sdetemplate{}{D^\prime}{\omega}}
\DeclareMathOperator{\smde}{\sdetemplate{M,}{D}{E}}
\DeclareMathOperator{\smdep}{\sdetemplate{M,}{D^\prime}{E}}
\DeclareMathOperator{\smdo}{\sdetemplate{M,}{D}{\omega}}
\newcommand{\altemplate}[1]{\ensuremath{\phi_{D,D^\prime\xspace{#1}}}}
\DeclareMathOperator{\alios}{\altemplate{,\omega^*}}
\DeclareMathOperator{\alio}{\altemplate{,\omega}}
\DeclareMathOperator{\ali}{\altemplate{}}
\newcommand{\alpsi}[1]{\ensuremath{\psi^{(#1)}_{D,D^\prime,\omega}}}
\newtheoremstyle{mystyle}
  {}
  {}
  {\itshape}
  {}
  {\scshape}
  {.}
  { }
  {}
\theoremstyle{mystyle}
\newtheorem{theorem}{Theorem}
\newtheorem{lemma}{Lemma}
\newtheorem{definition}{Definition}
\newtheorem{corollary}{Corollary}
\newcommand{\papertitle}{Free Gap Information from the Differentially Private Sparse Vector and Noisy Max Mechanisms}
\begin{document}


\title{\papertitle}



%
%
%
%

\numberofauthors{1} 

\author{
%
%
\alignauthor
Zeyu Ding, Yuxin Wang, Danfeng Zhang, Daniel Kifer \\\vspace{0.5em}
       \affaddr{Pennsylvania State University, University Park, PA 16802}\\
       \email{\{zyding, yxwang\}@psu.edu, \{zhang, dkifer\}@cse.psu.edu}
}
\date{1 May 2019}

\maketitle

\begin{abstract}
Noisy Max and Sparse Vector are selection algorithms for differential privacy and serve as building blocks for more complex algorithms. In this paper we show that both algorithms can release additional information for free (i.e., at no additional privacy cost). Noisy Max is used to return the approximate maximizer among a set of queries. We show that it can also release for free the noisy gap between the approximate maximizer and runner-up. \hl{This free information can improve the accuracy of certain subsequent counting queries by up to 50\%.} Sparse Vector is used to return a set of queries that are approximately larger than a fixed threshold. We show that it can adaptively control its privacy budget (use less budget for queries that are likely to be much larger than the threshold) \hl{in order to increase the amount of queries it can process}. \hl{These results follow from a careful privacy analysis.}
\end{abstract}

\section{Introduction}\label{sec:intro}
Industry and government agencies are increasingly adopting differential privacy \cite{dwork06Calibrating} to protect the confidentiality of users who provide data. Current and planned major applications include data gathering by Google \cite{rappor,prochlo}, Apple \cite{applediffp}, and Microsoft  \cite{DingKY17}; database querying by Uber \cite{elasticsensitivity}; and publication of population statistics at the U.S. Census Bureau \cite{ashwin08:map,onthemap,Haney:2017:UCF,abowd18kdd}.

The accuracy of differentially private data releases is very important in these applications. One way to improve accuracy is to increase the value of the privacy parameter $\epsilon$, known as the privacy loss budget, as it provides a tradeoff between an algorithm's utility and its privacy protections. However, values of $\epsilon$ that are deemed too high can subject a company to criticisms of not providing enough privacy \cite{TangPLApple}. For this reason, researchers invest significant effort in tuning algorithms \cite{diffperm,ectelo,pythia,deepdp,pate2,fanaeepour2018histogramming} and privacy analyses \cite{BS2016:zcdp,M2017:Renyi,pate2,ErlingssonFMRTT19} to provide better utility while using smaller privacy budgets.

Differentially private algorithms are built out of smaller components called \emph{mechanisms} \cite{pinq}. Popular mechanisms include the Laplace Mechanism \cite{dwork06Calibrating}, Geometric Mechanism \cite{universallyUtilityMaximizingPrivacyMechanisms},  \noisymax \cite{diffpbook} and \svt \cite{diffpbook,lyu2017understanding}. As we will explain in this paper, the latter two mechanisms, \noisymax and \svt, inadvertently throw away information that is useful for designing accurate algorithms. Our contribution is to present novel variants of these mechanisms that provide more functionality at the same privacy cost (under pure differential privacy).

Given a set of queries, \noisymax returns the identity (not value) of the query that is likely to have the largest value -- it adds noise to each query answer and returns the index of the query with the largest noisy value. Meanwhile, \svt takes a stream of queries and a predefined public threshold $T$. It tries to return the identities (not values) of the first $k$ queries that are likely larger than the threshold. To do so, it adds noise to the threshold. Then, as it sequentially processes each query, it outputs ``$\top$'' or ``$\bot$'', depending on whether the noisy value of the current query is larger or smaller than the noisy threshold. The mechanism terminates after $k$ ``$\top$'' outputs.

In recent work \cite{shadowdp}, using program verification tools, Wang et al. showed that \svt can provide additional information \emph{at no additional cost to privacy}. That is, when \svt returns ``$\top$'' for a query, it can also return the gap between its noisy value and the noisy threshold.\footnote{This was a surprising result given the number of incorrect attempts at improving \svt based on flawed manual proofs \cite{lyu2017understanding} and shows the power of automated program verification techniques.} We refer to their algorithm as \gapsvt.

Inspired by this program verification work, we propose novel variations of \svt and {\noisymax} \hl{that add new functionality}. For \svt, we show that in addition to releasing this gap information, even stronger improvements are possible -- we present an adaptive version that can answer more queries than before by controlling how much privacy budget it uses to answer each query. The intuition is that we would like to spend less of our privacy budget for queries that are probably much larger than the threshold (compared to queries that are probably closer to the threshold). A careful accounting of the privacy impact shows that this is possible. \hl{Our experiments} confirm that \adaptivesvt can answer many more queries than the prior versions \cite{lyu2017understanding,diffpbook,shadowdp} at the same privacy cost. 

For \noisymax, we show that it too inadvertently throws away information. Specifically, \emph{at no additional cost to privacy}, it can release an estimate of the gap between the largest and second largest queries (we call the resulting mechanism \gapmax). We  generalize this result to \topk~-- showing that one can release an estimate of the \emph{identities} of the $k$ largest queries and, at no extra privacy cost, release noisy estimates of the pairwise \emph{gaps} (differences) among the top $k+1$ queries. 

The extra \hl{noisy gap} information opens up new directions in the construction of differentially private algorithms \hl{and can be used to improve accuracy of certain subsequent queries}. For instance, one common task is to select the approximate top $k$ queries and then use additional privacy loss budget to \hl{obtain noisy answers to these queries.} We \hl{show that a postprocessing step can combine these noisy answers with gap information to improve accuracy by up to 50\% for counting queries}. 
\hl{We provide similar applications for the free gap information in \svt.}

We prove our results using the alignment of random variables technique \cite{lyu2017understanding,diffperm,shadowdp,lightdp}, which is based on the following question: if we change the input to a program, how must we change its random variables so that output remains the same? \hl{This technique is used to prove the correctness of almost all pure differential privacy mechanisms \cite{diffpbook} but needs to be used in sophisticated ways to prove the correctness of the more advanced algorithms \cite{lyu2017understanding,diffperm,diffpbook,shadowdp,lightdp}}. \hl{Nevertheless, alignment of random variables is} often used incorrectly (as discussed by Lyu et al. \cite{lyu2017understanding}). Thus a secondary contribution of our work is to lay out the precise steps and conditions that must be checked and to provide helpful lemmas that ensure these conditions are met. 


To summarize, our contributions are as follows:
\begin{itemize}[leftmargin=0.5cm,itemsep=0cm,topsep=0.5em,parsep=0.5em]
    \item We provide a simplified template for writing correctness proofs for intricate differentially private algorithms.
    \item Using this technique, we propose and prove the correctness of \hl{two new mechanisms: \gaptopk and \adaptivesvt.}
    These algorithms improve on the original versions of {\noisymax} and {\svt} \hl{by taking advantage of \emph{free} information (i.e., information that can be released at no additional privacy cost) that those algorithms inadvertently throw away.} 
    \item We demonstrate some of the uses of the gap information that is provided by these new mechanisms.  When an algorithm needs to use \noisymax or \svt to select some queries and then measure them (i.e., obtain their noisy answers), we show how the gap information from our new mechanisms can be used to improve the accuracy of the noisy measurements. We also show how the gap information in \svt can be used to estimate the confidence that a query's true answer really is larger than the threshold. 
    \item We empirically evaluate these new mechanisms on a variety of datasets to demonstrate their improved utility.
\end{itemize}

In Section \ref{sec:related}, we discuss related work. We present background and notation in Section \ref{sec:background}. We present simplified proof templates for randomness alignment in Section \ref{sec:raf}. We present \hl{\gaptopk} in Section \ref{sec:noisygap} and \hl{\adaptivesvt} in Section \ref{sec:improvesparse}. We present  experiments in Section \ref{sec:experiments}, \hl{proofs underlying the alignment of randomness framework} in Section \ref{subsec:incproofs} and conclusions in Section \ref{sec:conc}. The rest of our proofs can be found in the \appendixref.

\section{Related Work}\label{sec:related}
Selection algorithms, such as Exponential Mechanism \cite{exponentialMechanism,RaskhodnikovaS16}, \svt \cite{diffpbook,lyu2017understanding}, and \noisymax \cite{diffpbook} are used to select a set of items (typically queries) from a much larger set. They have applications in hyperparameter tuning \cite{diffperm,LiuPSPC}, iterative construction of microdata \cite{mwem}, feature selection \cite{GuhaSmith13}, frequent itemset mining \cite{BhaskarDFP}, exploring a privacy/accuracy tradeoff \cite{LigettNRWW17}, data pre-processing \cite{ChenDPRD}, etc.

Various generalizations have been proposed \cite{LigettNRWW17,BeimelNS16,GuhaSmith13,RaskhodnikovaS16,ChaudhuriLMM,LiuPSPC}. Liu and Talwar \cite{LiuPSPC} and Raskhodnikova and Smith \cite{RaskhodnikovaS16} extend the exponential mechanism for arbitrary sensitivity queries.
Beimel et al. \cite{BeimelNS16} and Thakurta and Smith \cite{GuhaSmith13} use the propose-test-release framework \cite{diffrobust} to find a gap between the best and second best queries and, if the gap is large enough, release the identity of the best query. These two algorithms rely on a relaxation of differential privacy called approximate $(\epsilon,\delta)$-differential privacy \cite{dworkKMM06:ourdata} and can fail to return an answer (in which case they return $\perp$). Our algorithms work with pure $\epsilon$-differential privacy. Chaudhuri et al. \cite{ChaudhuriLMM} also proposed a large margin mechanism (with approximate differential privacy) which finds a large gap separating top queries from the rest and returns one of them.

There have also been unsuccessful attempts to generalize selection algorithms such as \svt (incorrect versions are catalogued by Lyu et al. \cite{lyu2017understanding}), which has sparked innovations in program verification for differential privacy (e.g.,  \cite{Barthe16,Aws:synthesis,lightdp,shadowdp}) \hl{with techniques such as probabilistic coupling \cite{Barthe16} and a simplification based on randomness alignment \cite{lightdp}. These are similar to ideas behind handwritten proofs \cite{diffperm,diffpbook,lyu2017understanding} -- they consider what changes need to be made to random variables in order to make two executions of a program, with different inputs, produce the same output. It is a powerful technique that is behind almost all proofs of differential privacy, but is very easy to apply incorrectly \cite{lyu2017understanding}. In this paper, we state and prove a more general version of this technique in order to prove correctness of our algorithms and also provide additional results that simplify the application of this technique.}


\section{Notation and Background}\label{sec:background}
 In this paper, we use the following notation. $D$ and $D^\prime$ refer to databases. We use the notation $D\sim D^\prime$ to represent adjacent databases.\footnote{The notion of adjacency depends on the application. Some papers define it as $D$ can be obtained from $D^\prime$ by modifying one record \cite{dwork06Calibrating} or by adding/deleting one record \cite{Dwork06diffpriv}.} $M$ denotes a randomized algorithm whose input is a database.  $\Omega$ denotes the range of $M$ and $\omega\in \Omega$ denotes a specific output of $M$. We use $E\subseteq \Omega$ to denote a set of possible outputs. Because $M$ is randomized, it also relies on a random noise vector $H\in \RR^\infty$ (which usually consists of independent zero-mean Laplace random variables). This noise sequence is infinite, but of course $M$ will only use a finite-length prefix of $H$. When we need to draw attention to the noise, we use the notation $M(D,H)$ to indicate the execution of $M$ with database $D$ and randomness coming from $H$. Otherwise we use the notation $M(D)$. Define $\smde = \set{H\mid M(D,H)\in E }$ to be the set
 of noise vectors that allow $M$, on input $D$, to produce an output in the set $E\subseteq \Omega$. To avoid overburdening the notation, we write $\sde$ for $\smde$ and $\sdep$ for $\smdep$ when $M$ is clear from the context. When $E$ consists of a single point $\omega$, we write these sets as $\sdo$ and $\sdop$. 
 This notation is summarized in the table below.

\begin{table}[!ht]
\begin{center}
\caption{Notation}\label{tab:notation}
\begin{tabular}{c c}
\Xhline{1.5\arrayrulewidth}
\textbf{Symbol} & \textbf{Meaning} \\ \hline
$M$ &  \hl{randomized algorithm}  \\ 
$D, D'$& database \\ 
$D\sim D^\prime$ & $D$ is adjacent to $D^\prime$\\
$H=(\eta_1,\eta_2,\ldots)$ & input noise vector \\
$\Omega$ & the space of all output of $M$  \\ 
$\omega$ & a possible output; $\omega\in\Omega$  \\ 
$E$ & a set of possible outputs; $E\subseteq\Omega$  \\
$\sde = \smde$ & $\set{H \mid M(D,H)\in E}$ \\ 
$\sdo=\smdo$ & $\set{H \mid M(D,H)= \omega}$ \\ 
\Xhline{1.5\arrayrulewidth}
\end{tabular}
\end{center}
\end{table}

\subsection{Formal Privacy} Differential privacy \cite{dwork06Calibrating,Dwork06diffpriv,diffpbook} is currently the gold standard for releasing privacy-preserving information about a database. It has a parameter $\epsilon>0$ known as the privacy loss budget. The smaller it is, the more privacy is provided. Differential privacy bounds the effect of one record on the output of the algorithm (for small $\epsilon$, the probability of any output is barely affected by any person's record).
\begin{definition}[Pure Differential Privacy \cite{Dwork06diffpriv}]
Given an $\epsilon>0$, a randomized algorithm $M$ with output space $\Omega$ satisfies (pure) $\epsilon$-differential privacy if for all $E\subseteq\Omega$ and all pairs of adjacent databases $D\sim D^\prime$, the following holds:
\begin{equation}\label{eq:dpdef}
    \PP(M(D,H)\in E) \leq e^\epsilon \PP(M(D',H)\in E)
\end{equation}
where the probability is only over the randomness of $H$. 
\end{definition}
With the notation in Table~\ref{tab:notation}, the differential privacy condition from Equation \eqref{eq:dpdef} is $\PP(\sde)\leq e^\epsilon \PP(\sdep)$.

Differential privacy enjoys the following nice properties:
\begin{itemize}[leftmargin=0.5cm,itemsep=0cm,topsep=0.5em,parsep=0.5em]
    \item Resilience to Post-Processing. If we apply an algorithm $A$ to the output of an $\epsilon$-differentially private algorithm $M$, then the composite algorithm $A\circ M$ still satisfies $\epsilon$-differential privacy. In other words, privacy is not reduced by post-processing.
    
    \item Composition. If $M_1,M_2,\dots, M_k$ satisfy differential privacy with privacy loss budgets $\epsilon_1,\dots,\epsilon_k$, the algorithm that runs all of them and releases their outputs satisfies $(\sum_i\epsilon_i)$-differential privacy.
\end{itemize}

Many differentially private algorithms take advantage of the Laplace mechanism \cite{exponentialMechanism}, which provides a noisy answer to a vector-valued query $\vq$ based on its $L_1$ \emph{global sensitivity} $\Delta_{\vq}$, defined as follows:
\begin{definition}[$L_1$ Global Sensitivity \cite{diffpbook}] The global sensitivity of a query $\vq$ is
$\Delta_{\vq} = \sup_{D\sim D'} \onenorm{\vq(D)-\vq(D')}$.
\end{definition}
\begin{theorem}[Laplace Mechanism \cite{dwork06Calibrating}]
Given a privacy loss budget $\epsilon$, consider the mechanism that returns $\vq(D)+ H$, where $H$ is a vector of independent random samples from the Laplace$(\Delta_{\vq}/\epsilon)$ distribution with mean 0 and scale parameter $\Delta_{\vq}/\epsilon$. This Laplace mechanism satisfies $\epsilon$-differential privacy. 
\end{theorem}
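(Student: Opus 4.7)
The plan is to reduce the measure-theoretic statement of differential privacy to a pointwise bound on probability densities. Specifically, since the output $\vq(D) + H$ is a continuous random vector, it suffices to show that for every $\omega \in \Omega$ and every pair of adjacent databases $D \sim D'$, the density ratio $p_D(\omega)/p_{D'}(\omega)$ is bounded by $e^\epsilon$; integrating this pointwise inequality over any measurable set $E$ then yields the differential privacy condition $\PP(M(D,H) \in E) \le e^\epsilon \PP(M(D',H) \in E)$.

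First I would write out the densities explicitly. Because the components of $H$ are independent Laplace samples with scale $\Delta_{\vq}/\epsilon$, the joint density at $\omega$ takes the form $p_D(\omega) = C \exp(-\epsilon\, \onenorm{\omega - \vq(D)}/\Delta_{\vq})$ for a normalizing constant $C$ that depends only on the dimension and the scale, not on $D$. Taking the ratio, $C$ cancels, leaving $p_D(\omega)/p_{D'}(\omega) = \exp\bigl(\tfrac{\epsilon}{\Delta_{\vq}} (\onenorm{\omega - \vq(D')} - \onenorm{\omega - \vq(D)})\bigr)$.

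The key step is then to bound the difference of $L_1$ norms by the global sensitivity. By the reverse triangle inequality applied coordinatewise, $\onenorm{\omega - \vq(D')} - \onenorm{\omega - \vq(D)} \le \onenorm{\vq(D) - \vq(D')}$, and by the definition of $\Delta_{\vq}$ together with the adjacency of $D$ and $D'$, the right-hand side is at most $\Delta_{\vq}$. Substituting back gives a ratio bounded by $e^\epsilon$, which completes the pointwise comparison and therefore the proof.

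This is a classical result, so I do not anticipate a substantive obstacle. The only subtleties worth flagging are that the argument implicitly uses independence of the Laplace coordinates, so that the joint density factors into an exponential of an $L_1$ norm rather than something less tractable, and that the choice of $L_1$ sensitivity is precisely what makes the reverse triangle inequality line up with the exponent of the Laplace density. Under $L_2$ geometry the same template would instead lead to the Gaussian mechanism, which satisfies only approximate $(\epsilon,\delta)$-differential privacy rather than the pure $\epsilon$-version proved here.
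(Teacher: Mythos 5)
Your proof is correct, and it is the classical direct density-ratio argument: write out the joint Laplace density of the output at a fixed $\omega$, observe that the normalizing constant cancels in the ratio, and bound the exponent by the reverse triangle inequality for the $L_1$ norm together with the sensitivity bound. The paper itself does not prove this theorem (it is cited to Dwork et al.), but it does illustrate its randomness-alignment framework on exactly this mechanism in Examples~\ref{ex:lp} and~\ref{ex:raf}: there one sets $\eta'_i = \eta_i + q_i(D) - q_i(D')$, checks acyclicity trivially, and computes $\cost(\alio) = \tfrac{\epsilon}{\Delta_{\vq}}\sum_i |q_i(D)-q_i(D')| \le \epsilon$, then invokes Lemma~\ref{lem:alignmentbound}. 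The two routes are mathematically cousins --- the alignment cost is precisely the exponent in your density ratio, and the reverse triangle inequality you use corresponds to the cancellation of the alignment shift against the query shift --- but the organization differs. Your direct proof is shorter and transparent for mechanisms of the form ``deterministic function plus additive noise,'' where the output density is explicit. The alignment formulation is what the paper is really after because it scales to mechanisms with branching control flow (Sparse Vector, Noisy Max, the adaptive variants), where the output density has no closed form and a direct ratio computation like yours would be intractable; in those cases one instead reasons locally about how much each noise coordinate must shift to preserve the output. So your argument is sound and is the standard one, but it does not exercise the machinery the paper builds on, which is the point of the paper's presentation.

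Two minor side notes. First, your proof implicitly assumes a fixed finite dimension $d$ so that the normalizing constant $C = (\epsilon/2\Delta_{\vq})^d$ is well defined; the paper's alignment proof has the same implicit restriction, so this is not a discrepancy, merely worth stating. Second, your closing remark about the $L_2$ analogue leading to the Gaussian mechanism and $(\epsilon,\delta)$-DP is accurate and is a nice observation, though orthogonal to the proof.
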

\hl{Other kinds of additive noise distributions that can be used in place of Laplace include Discrete Laplace \cite{universallyUtilityMaximizingPrivacyMechanisms} (when all query answers are integers or multiples of a common base) and Staircase \cite{staircase}.}
\section{Randomness Alignment}\label{sec:raf}
To establish that the algorithms we propose are differentially private, we use an idea called \emph{randomness alignment} that previously had been used to prove the privacy of a variety of sophisticated algorithms \cite{diffpbook,lyu2017understanding,diffperm} and incorporated into verification/synthesis tools \cite{lightdp,shadowdp,Aws:synthesis}. While powerful, this technique is also easy to use incorrectly \cite{lyu2017understanding}, as there are many technical conditions that need to be checked. 
In this section, we present results (namely Lemma \ref{lem:alignmentbound}) that significantly simplify this process and make it easy to prove the correctness of our proposed algorithms. 

In general, to prove $\epsilon$-differential privacy for an algorithm $M$, one needs to show $P(M(D)\in E)\leq e^\epsilon P(M(D^\prime)\in E)$ for all pairs of adjacent databases $D\sim D^\prime$ and sets of possible outputs $E\subseteq \Omega$. In our notation, this inequality is represented as $\PP(\sde)\leq e^\epsilon \PP(\sdep)$. Establishing such inequalities is often done with the help of a function $\ali$, called a \emph{randomness alignment} (there is a function $\ali$ for every pair $D\sim D^\prime$), that maps noise vectors $H$ into noise vectors $H^\prime$ so that $M(D^\prime,H^\prime)$ produces the same output as  $M(D, H)$. Formally, 
\begin{definition}[Randomness Alignment] Let $M$ be \hl{a} randomized algorithm. Let $D\sim D^\prime$ be a pair of adjacent databases. A \emph{randomness alignment} is a function $\ali: \RR^\infty \rightarrow \RR^\infty$ such that  for all $H$ on which $M(D,H)$ terminates, we have $M(D,H) = M(D',\ali(H))$.
\end{definition}

\begin{example}\label{ex:lp}
Let $D$ be a database that records the salary of every person, which is guaranteed to be between 0 and 100. Let $q(D)$ be the sum of the salaries in $D$. The sensitivity of $q$ is thus $100$. Let $H=(\eta_1,\eta_2,\dots)$ be a vector of independent Laplace$(100/\epsilon)$ random variables. The Laplace mechanism outputs $q(D) + \eta_1$ (and ignores the remaining variables in $H$). For every pair of adjacent databases $D\sim D^\prime$, one can define the corresponding randomness alignment $\ali(H)=H^\prime=(\eta^\prime_1,\eta^\prime_2,\dots)$, where $\eta^\prime_1=\eta_1 + q(D)-q(D^\prime)$ and $\eta^\prime_i=\eta_i$ for $i>1$. Note that $q(D)+\eta_1 = q(D^\prime)+\eta^\prime_1$, so the output of $M$ remains the same.
\end{example}

 In practice, $\ali$ is constructed locally (piece by piece) as follows. For each possible output $\omega\in \Omega$, one defines a function $\alio$ that maps noise vectors $H$ into noise vectors $H^\prime$ with the following properties:  if $M(D,H)=\omega$ then $M(D^\prime, H^\prime)=\omega$ (that is, $\alio$ only cares about what it takes to produce the specific output $\omega$). We obtain our randomness alignment $\ali$ in the obvious way by piecing together the $\alio$ as follows:  $\ali(H)=\alios(H)$, where $\omega^*$ is the output of $M(D,H)$. Formally,

\begin{definition}[Local Alignment] Let $M$ be \hl{a} randomized algorithm. Let $D\sim D^\prime$ be a pair of adjacent databases and $\omega$ a possible output of $M$. A \emph{local alignment} for $M$ is a function $\alio: \sdo \rightarrow \sdop$ (see notation in Table \ref{tab:notation}) such that  for all $ H\in \sdo$, we have $M(D,H) = M(D',\alio(H))$.
\end{definition}

\begin{example}\label{ex:thresh}
Continuing the setup from Example \ref{ex:lp}, consider the mechanism $M_1$ that, on input $D$, outputs $\top$ if $q(D)+\eta_1 \geq 10,000$ (i.e. if the noisy total salary is at least $10,000$) and $\perp$ if $q(D)+\eta_1 < 10,000$. Let $D^\prime$ be a database that differs from $D$ in the presence/absence of one record. Consider the local alignments $\altemplate{,\top}$ and $\altemplate{,\perp}$ defined as follows.  $\altemplate{,\top}(H)=H^\prime=(\eta^\prime_1,\eta^\prime_2,\dots)$ where $\eta^\prime_1 = \eta_1 + 100$ and $\eta^\prime_i=\eta_i$ for $i> 1$; and $\altemplate{,\perp}(H)=H^{\prime\prime}=(\eta^{\prime\prime}_1,\eta^{\prime\prime}_2,\dots)$ where  $\eta^{\prime\prime}_1 = \eta_1 - 100$ and $\eta^{\prime\prime}_i=\eta_i$ for $i> 1$. Clearly, if $M_1(D,H)=\top$ then $M_1(D^\prime, H^\prime)=\top$ and if $M_1(D, H)=\perp$ then $M_1(D^\prime, H^{\prime\prime})=\perp$. We piece these two local alignments together to create a randomness alignment $\ali(H)=H^*=(\eta^*_1,\eta^*_2,\dots)$ where:
\begin{align*}
    \eta^*_1 &= \begin{cases}
        \eta_1 + \hl{100} & \text{ if } M(D,H) =\top \\
        &\text{ (i.e. $q(D)+\eta_1 \geq 10,000$)}\\
        \eta_1 - \hl{100} & \text{ if } M(D,H) = \perp\\
        &\text{ (i.e. $q(D)+\eta_1 < 10,000$)}\\
    \end{cases}\\
    \eta^*_i &= \eta_i \text{ for }  i>1
\end{align*}
\end{example}

%

\paragraph*{Special properties of alignments}
Not all alignments can be used to prove differential privacy. In this section we discuss some additional properties that help prove differential privacy. 
We first make two mild assumptions about the mechanism $M$: (1) it terminates with probability\footnote{That is, for each input $D$,  there might be some random vectors $H$ for which $M$ does not terminate, but the total probability of these vectors is 0, so we can ignore them.} one  and (2) based on the output of $M$, we can determine how many random variables it used. The vast majority of differentially private algorithms in the literature satisfy these properties.

We next define two properties of  a local alignment: whether it is \emph{acyclic} and what its \emph{cost} is.
\begin{definition}[Acyclic]\label{def:acyclic}
Let $M$ be a randomized algorithm. Let $\alio$ be a local alignment for $M$. For any $H=(\eta_1, \eta_2,\dots)$, let $H^\prime=(\eta_1^\prime, \eta_2^\prime, \dots)$ denote $\alio(H)$. We say that $\alio$ is acyclic if there exists a permutation $\pi$ and piecewise differentiable functions $\alpsi{j}$ such that:
\begin{align*}
    \eta^\prime_{\pi(1)} &= \eta_{\pi(1)} + \text{number that only depends on $D$, $D^\prime$, $\omega$}\\
    \eta^\prime_{\pi(j)} &= \eta_{\pi(j)} + \alpsi{j}(\eta_{\pi(1)},\dots,\eta_{\pi(j-1)}) ~\text{for $j\geq 2$}
\end{align*}
\end{definition}
Essentially, a local alignment $\alio$ is acyclic if there is some ordering of the variables so that $\eta^\prime_j$ is the sum of $\eta_j$ and a function of the variables that came earlier in the ordering. The local alignments  $\altemplate{,\top}$ and  $\altemplate{,\perp}$ from Example \ref{ex:thresh} are both acyclic  (in general, each local alignment function is allowed to have its own specific ordering and differentiable functions $\alpsi{j}$). The pieced-together randomness alignment $\ali$ itself need not be acyclic. 

\begin{definition}[Alignment Cost]\label{def:alignmentcost}
Let $M$ be a randomized algorithm that uses $H$ as its source of randomness. Let $\alio$ be a local alignment for $M$.  
 For any $H=(\eta_1, \eta_2,\dots)$, let $H^\prime=(\eta_1^\prime, \eta_2^\prime, \dots)$ denote $\alio(H)$. \hl{Suppose each $\eta_i$ is generated independently from a
distribution $f_i$ with the property that $\log(f_i(x)/f_i(y)) \leq |x-y|/\alpha_i$ for all $x,y$ in the domain of $f_i$  -- this includes the Laplace$(\alpha_i)$ distribution along with Discrete Laplace \cite{universallyUtilityMaximizingPrivacyMechanisms} and Staircase \cite{staircase}}. Then the cost of $\alio$ is defined as:
\begin{align*}
\text{cost}(\alio) &= \sum_i |\eta_i - \eta^\prime_i|/\alpha_i    
\end{align*}
\end{definition}

The following lemma uses those properties to establish that $M$ satisfies $\epsilon$-differential privacy.

\begin{lemma}\label{lem:alignmentbound}  
Let $M$ be \hl{a randomized}  algorithm with input randomness $H=(\eta_1,\eta_2,\dots)$. If the following conditions are satisfied, then $M$ satisfies $\epsilon$-differential privacy.
\begin{enumerate}[leftmargin=5mm,parsep=0.25em,itemsep=0.25em,topsep=0.5em,label=(\roman{enumi})]
    \item $M$ terminates with probability 1.
    \item The number of random variables used by $M$ can be determined from its output.
    \item\label{conditioniii} Each $\eta_i$ is generated independently from a distribution  \hl{$f_i$ with the property that $\log(f_i(x)/f_i(y)) \leq |x-y|/\alpha_i$ for all $x,y$ in the domain of $f_i$ (such as Laplace$(\alpha_i)$).}
    \item\label{conditioniv} \underline{For every $D\sim D^\prime$ and $\omega$} there exists a local alignment $\alio$ that is acyclic with $\cost(\alio)\leq \epsilon$.
    \item \underline{For each $D\sim D^\prime$} the number of distinct local alignments is countable. That is, the set $\{\alio\mid \omega\in\Omega\}$ is countable (i.e., for many choices of $\omega$ we get the same exact alignment function).
\end{enumerate}
\end{lemma}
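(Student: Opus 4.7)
The plan is to prove $\PP(\sde) \leq e^\epsilon \PP(\sdep)$ for every measurable $E\subseteq \Omega$ by a change-of-variables argument driven by the local alignments. First, using condition (v), I would partition the outputs in $E$ into a countable collection of equivalence classes $\{C_k\}$ on which the alignment function $\alio$ is constant; denote the common alignment on $C_k$ by $\phi_k$. This yields the countable disjoint decomposition $\sde = \bigsqcup_k A_k$ where $A_k := \bigsqcup_{\omega\in C_k}\sdo$, and because each local alignment satisfies $\phi_k(\sdo)\subseteq\sdop$, we also get $\phi_k(A_k)\subseteq B_k := \bigsqcup_{\omega\in C_k}\sdop$. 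Condition (ii) is used here to reduce each $\PP(A_k)$ to an integral of finite dimension: conditioning on the (output-determined) number $n$ of random variables actually consumed by $M$, the remaining coordinates of $H$ integrate to one and do not affect density ratios.

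For each class I would apply the change of variables $H' = \phi_k(H)$. The acyclicity hypothesis is the technical workhorse at this step: after reordering coordinates by the permutation $\pi$ promised by Definition \ref{def:acyclic}, the Jacobian matrix of $\phi_k$ is lower triangular with $1$'s on the diagonal, since $\eta'_{\pi(j)}$ equals $\eta_{\pi(j)}$ plus a function of strictly earlier coordinates. Hence $|\det J_{\phi_k}| = 1$ wherever $\phi_k$ is differentiable. Since Definition \ref{def:acyclic} only guarantees piecewise differentiability, I would apply the change of variables on each smooth piece separately and add.

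Condition (iii) then bounds the density ratio pointwise: $f_i(\eta_i)/f_i(\eta'_i)\leq \exp(|\eta_i-\eta'_i|/\alpha_i)$ coordinatewise, so the joint density ratio is at most $\exp(\cost(\phi_k))\leq e^\epsilon$ by condition (iv). Combining the unit Jacobian, the density bound, and the inclusion $\phi_k(A_k)\subseteq B_k$ gives $\PP(A_k)\leq e^\epsilon \PP(B_k)$. Because $M$ is a function, the sets $\sdop$ for distinct $\omega$ are disjoint, so $\sdep = \bigsqcup_{\omega\in E}\sdop = \bigsqcup_k B_k$, and summing over the countable collection yields $\PP(\sde) = \sum_k \PP(A_k) \leq e^\epsilon \sum_k \PP(B_k) = e^\epsilon \PP(\sdep)$. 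Condition (i) is used at the start to discard the measure-zero set of $H$ on which $M$ fails to terminate.

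The main obstacle I foresee is the measure-theoretic bookkeeping around the change of variables: verifying that the $A_k$ and their images are measurable, that the piecewise change-of-variables formula glues consistently across the countable partition, and that the integration really can be truncated to the $n$ coordinates actually used by $M$. The acyclicity condition does the heavy lifting by making the Jacobian trivial, and condition (v) is what makes the argument tractable at all, since without countability the set of distinct $\alio$'s could be too large to aggregate via a countable sum of integrals.
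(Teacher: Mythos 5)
Your proposal follows essentially the same route as the paper: the paper's proof splits into Theorem~\ref{thm:main} (partition $E$ by distinct alignment function, apply change of variables, sum) and the proof of Lemma~\ref{lem:alignmentbound} (acyclicity $\Rightarrow$ injectivity and unit Jacobian; cost bound $\Rightarrow$ density-ratio bound), and you have simply merged these two steps into one argument with different labels on the partition classes. The only omission is that you never explicitly note that the triangular structure you describe makes each $\phi_k$ globally injective (required for the change of variables), but this is immediate from the recursive dependence you already identified and is exactly what the paper proves by strong induction.
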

We defer the proof to Section \ref{subsec:incproofs}.

\begin{example}\label{ex:raf}
Consider the randomness alignment $\ali$ from Example~\ref{ex:lp}. We can define all of the local alignments $\alio$ to be the same function: $\alio(H)=\ali(H)$. Clearly $\cost(\alio) = \sum_{i=0}^\infty \frac{\epsilon}{100}\abs{\eta'_i-\eta_i}\ =\frac{\epsilon}{100}\abs{q(D')-q(D)} \leq 
\epsilon$. For Example~\ref{ex:thresh}, there are two acyclic local alignments $\altemplate{\top}$ and $\altemplate{\bot}$, both have $\cost = 100\cdot \frac{ \epsilon}{100} = \epsilon$. The other conditions in Lemma~\ref{lem:alignmentbound} are trivial to check. Thus both mechanisms satisfy $\epsilon$-differential privacy by Lemma~\ref{lem:alignmentbound}.
\end{example}

\section{Improving Noisy Max}\label{sec:noisygap}
In this section, we present novel variations of the \noisymax mechanism \cite{diffpbook}. Given a list of queries with sensitivity 1, the purpose of \noisymax is to estimate the identity (i.e., index) of the largest query. We show that, in addition to releasing this index, it is possible to release a numerical estimate of the gap between the values of the largest and second largest queries. This extra information comes at no additional cost to privacy, meaning that the original \noisymax mechanism threw away useful information. \hl{This result can be generalized to the setting in which one wants to estimate the identities of the top $k$ queries -  we can release (for free) all of the gaps between each top $k$ query and the next best query (i.e., the gap between the best and second best queries, the gap between the second and third best queries, etc). When a user  subsequently asks for a noisy answer to each of the returned queries, we show how the gap information can be used to reduce squared error by up to 50\% (for counting queries).}


\subsection{\gaptopk}
\hl{
Our proposed \gaptopk mechanism is shown in Algorithm \ref{alg:gaptopk} (the function $\arg\max_c$ returns the top c items). We can obtain the classical Noisy Max algorithm \cite{diffpbook} from it by setting $k=1$ and throwing away the gap information (the boxed items on Lines \ref{line:gapdef} and \ref{line:gapreturn}). The \gaptopk algorithm takes as input a sequence of $n$ queries $q_1,\dots, q_n$, each having sensitivity 1. It adds Laplace noise to each query. It returns the indexes $j_1,\dots, j_k$ of the $k$ queries with the largest noisy values in descending order. Furthermore, for each of these top $k$ queries $q_{j_i}$, it releases the noisy gap between the value of $q_{j_i}$ and the value of the next best query. Our key contribution in this section is the observation that these gaps can be released for free. That is, the classical Top-K algorithm, which does not release the gaps, satisfies $\epsilon$-differential privacy. But, our improved version has exactly the same privacy cost yet is strictly better because of the extra information it can release.
}

\begin{algorithm}[ht]
\SetKwProg{Fn}{function}{\string:}{}
\SetKwFunction{Test}{\funcgaptopk}
\SetKwInOut{Input}{input}
\DontPrintSemicolon
\Input{$\vq$: a list of $n$ queries of global sensitivity 1\\
$D$: database, $k$: \# of indexes, $\epsilon$: privacy budget}
\Fn{\Test{$\vq$, $D$, $k$, $\epsilon$}}{
\ForEach{$\mathtt{i} \in \set{1, \cdots, n}$}{
$\eta_i \gets \lap(2k/\epsilon)$\;
$\widetilde{q}_i \gets q_i(D)+\eta_i$\;
}
$(j_1,\ldots, j_{k+1}) \gets \arg\max_{k+1}(\widetilde{q}_1,\ldots, \widetilde{q}_n)$\;
\ForEach{$\mathtt{i} \in \set{1, \cdots, k }$}{
\fbox{$g_{i} \gets \widetilde{q}_{j_i} - \widetilde{q}_{j_{i+1}}$ ~~~~~~\tcp{$i^{th}$ gap}}\label{line:gapdef}
}
\Return $((j_1\fbox{, $g_1$}),\ldots, (j_k \fbox{, $g_k$}))$\label{line:gapreturn} \tcp{$k$ queries}
}
\caption{\gaptopk}
\label{alg:gaptopk}
\end{algorithm}

We \hl{emphasize} that keeping the noisy gaps hidden does not decrease the privacy cost. Furthermore, this algorithm gives estimates of the pairwise gaps between any pair of the $k$ queries it selects. For example, suppose we are interested in estimating the gap between the $a^{\text{th}}$ largest and $b^\text{th}$ largest queries (where $a < b\leq k$). This is equal to $\sum_{i=a}^{b-1} g_i$ because:
$
    \sum_{i=a}^{b-1} g_i = \sum_{i=a}^{b-1} (\widetilde{q}_{j_i} - \widetilde{q}_{j_{i+1}}) = \widetilde{q}_{j_a} - \widetilde{q}_{j_b}
$
and hence its variance is $\var(\widetilde{q}_{j_a} - \widetilde{q}_{j_b}) = 16k^2/\epsilon^2$.

The original Noisy Top-K mechanism satisfies $\epsilon$-differential privacy. In the special case that all the $q_i$ are counting queries\footnote{i.e., when a person is added to a database, the value of each query either stays the same or increases by 1.} then it satisfies $\epsilon/2$-differential privacy \cite{diffpbook}. We will show the same properties for \gaptopk. 
We prove the privacy property in this section and then in Section \ref{sec:blue} we show how to use this gap information. However, first it is important to discuss the difference between the theoretical analysis of Noisy Top-K \cite{diffpbook} and its practical implementation on finite-precision computers.

\paragraph*{Implementation issues} The analysis of the original \noisymax mechanism assumed the use of \emph{true} Laplace noise (a continuous distribution) so that ties are impossible between the largest and second largest noisy queries \cite{diffpbook}. On finite precision computers, ties are possible (breaking the privacy proof \cite{diffpbook}) \hl{with some probability $\delta$}. \hl{Thus, one would settle for a slightly weaker guarantee called  $(\epsilon,\delta)$-differential  privacy \cite{dworkKMM06:ourdata}. It roughly states that the privacy conditions of pure $\epsilon$-differential privacy fail with probability at most $\delta$}. 
\hl{In practice, one would discretize the Staircase \cite{staircase} or Laplace distribution so that it outputs a multiple of some base $\gamma$.}
%
%
%
In the \appendixref, we show that if there are $n$ queries with sensitivity 1 and discretized Lapalce($1/\epsilon)$ noise with base $\gamma$ is added to each of them, the probability of a tie is upper bounded by $\delta=\epsilon \gamma n^2$ \hl{(similar calculations can be performed with the Staircase distribution)}. Thus this is an upper bound on the probability that the differential privacy guarantees will fail. Typically, one would expect $\gamma$ to be close to machine epsilon (e.g., $\approx 2^{-52}$) so the probability of a tie is negligible. In this section we will also analyze our algorithms under the assumption of continuous noise. Thus the privacy guarantees can fail with this negligible probability $\delta$ (hence satisfying $(\epsilon,\delta)$-differential privacy \cite{dworkKMM06:ourdata}).

\hl{
\paragraph*{Local alignment}
To prove the privacy of Algorithm \ref{alg:gaptopk}, we need to create a local alignment function for each possible pair $D\sim D^\prime$ and output $\omega$. Note that our mechanism uses precisely $n$ random variables. Let $H=(\eta_1,\eta_2,\dots)$ where $\eta_i$ is the noise that should be added to the $i^\text{th}$ query. We view the output $\omega=((j_1,g_1),\ldots, (j_k, g_k))$ as $k$ pairs where in the $i^{\text{th}}$ pair $(j_i,g_i)$, the first component $j_i$ is the index of $i^\text{th}$ largest noisy query and the second component $g_i$ is the gap in noisy value between the $i^\text{th}$ and $(i+1)^\text{th}$ largest noisy queries. As discussed in the implementation issues, we will base our analysis on continuous noise so that the probability  of ties among the top $k+1$ noisy queries is $0$. Thus each gap is positive: $g_i>0$. }

\hl{
Let $\io = \set{j_1,\ldots, j_k}$ and $\ioc=\set{1, \ldots, n}\setminus\io$.  I.e., $\io$ is the index set of the $k$ largest noisy queries selected by the algorithm and $\ioc$ is the index set of all unselected queries. 
For $H\in\sdo$ define $\alio(H) = H'=(\eta'_1, \eta'_2, \ldots)$ as
\begin{equation}\label{eq:gaptopk_align}
\eta_i' =\begin{cases}
\eta_i &i\in\ioc\\
\eta_i + q_i-q'_i+\max\limits_{l\in\ioc}(q_l'+\eta_l)- \max\limits_{l\in\ioc}(q_l+\eta_l)  &i\in\io
\end{cases}    
\end{equation}
The idea behind this local alignment is simple: we want to keep the noise of the losing queries the same (when the input is $D$ or its neighbor $D^\prime)$. But, for each of the $k$ selected queries, we want to align its noise to make sure it wins by the same amount when the input is $D$ or its neighbor $D'$. 
}

\hl{
\begin{lemma}\label{lem:topk_align}
Let $M$ be the \gaptopk algorithm.
For all $D\sim D^\prime$ and $\omega$, the functions $\alio$ defined above are acyclic local alignments for $M$. 
Furthermore, for every pair $D\sim D^\prime$, there are countably many distinct $\alio$.
\end{lemma}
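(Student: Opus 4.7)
The plan is to establish three claims about $\alio$ in \eqref{eq:gaptopk_align}: that it maps $\sdo$ into $\sdop$, that it is acyclic in the sense of Definition \ref{def:acyclic}, and that as $\omega$ ranges over the output space only finitely many distinct functions arise. All three will rest on one structural observation: the shift $\alio(H)-H$ depends on $\omega$ solely through the unordered set $\io=\{j_1,\ldots,j_k\}$, and it treats coordinates in $\io$ and $\ioc$ in very different ways.

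For the first claim, let $\Delta=\max_{l\in\ioc}(q'_l+\eta_l)-\max_{l\in\ioc}(q_l+\eta_l)$. A direct substitution into \eqref{eq:gaptopk_align} gives $q'_i+\eta'_i=(q_i+\eta_i)+\Delta$ for every $i\in\io$ and $q'_i+\eta'_i=q'_i+\eta_i$ for every $i\in\ioc$. Because every selected query's noisy value is shifted by the same $\Delta$, the internal ranking among the selected queries and the first $k-1$ gaps are preserved. Among the unselected queries, the new maximum becomes $\max_{l\in\ioc}(q_l+\eta_l)+\Delta$, which is strictly smaller than the new value $\widetilde{q}_{j_k}+\Delta$ of the $k$-th selected query because the strict inequality $\widetilde{q}_{j_k}>\max_{l\in\ioc}(q_l+\eta_l)$ holds with probability one under continuous noise; thus on input $D'$ the algorithm still selects the same top $k$ indices in the same order. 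The $k$-th gap requires the one nontrivial cancellation: under $D$ it equals $\widetilde{q}_{j_k}-\max_{l\in\ioc}(q_l+\eta_l)$, while under $D'$ the algorithm computes $(\widetilde{q}_{j_k}+\Delta)-\max_{l\in\ioc}(q'_l+\eta_l)$, and the definition of $\Delta$ makes these two expressions equal.

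For acyclicity, I will use any permutation $\pi$ that lists every index of $\ioc$ before any index of $\io$. For $\pi(1)\in\ioc$ the shift is $0$, a constant depending trivially on $D,D',\omega$; for any later $\pi(j)\in\ioc$ the shift is again $0$; and for $\pi(j)=i\in\io$ the shift equals $(q_i-q'_i)+\max_{l\in\ioc}(q'_l+\eta_l)-\max_{l\in\ioc}(q_l+\eta_l)$, a piecewise linear (hence piecewise differentiable) function of exactly those $\eta_l$ with $l\in\ioc$, each of which appears earlier in $\pi$. For the countability claim, the structural observation above shows $\alio$ depends on $\omega$ only through the set $\io$, so the number of distinct alignments is at most $\binom{n}{k}$, which is finite. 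I expect the main obstacle to be the bookkeeping for the $k$-th gap in the first claim, since this is the one place where the specific form of $\Delta$ is used nontrivially rather than merely its equal-shift property on the selected coordinates.
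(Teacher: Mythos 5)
Your proof is correct and follows essentially the same approach as the paper's: verify the alignment preserves the output by showing each gap and the ranking are unchanged, observe acyclicity via a permutation placing $\ioc$ before $\io$, and count distinct alignments by noting $\alio$ depends on $\omega$ only through $\io$, giving at most $\binom{n}{k}$ of them. Your reorganization around the uniform shift $\Delta$ (all selected coordinates shift by $\Delta$, all unselected by $0$) is a slightly tidier way to see that the internal ranking and first $k-1$ gaps are preserved, but the underlying calculation is identical to the paper's, which expands each of the $k$ gaps explicitly.
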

}
\hl{
\begin{proof}
Given $D\sim D^\prime$ and $\omega=((j_1,g_1),\ldots, (j_k,g_k))$,
for any $H=(\eta_1,\eta_2,\dots)$ such that $M(D,H)=\omega$, let $H^\prime=(\eta_1^\prime, \eta_2^\prime, \dots) = \alio(H)$.
We show that $M(D', H') = \omega$. Since $\alio$ is identity on components $i\in \ioc$, we have  $\max\limits_{l\in\ioc}(q'_l+\eta'_l) = \max\limits_{l\in\ioc}(q'_l+\eta_l)$.
So, for the $k^\text{th}$ selected query:
\begin{align*}
(q'_{j_k} +\eta'_{j_k}) - \max\limits_{l\in\ioc}(q'_l+\eta_l')
&=(q'_{j_k} +\eta'_{j_k}) - \max\limits_{l\in\ioc}(q'_l+\eta_l)\\
&\hspace{-1cm}= (q_{j_k} + \eta_{j_k}) - \max\limits_{l\in\ioc}(q_l+\eta_l) = g_{k} > 0
\end{align*}
where the last line follows from Equation \ref{eq:gaptopk_align}.
This means on $D'$ the noisy query with index $j_k$ is larger than the best of the unselected noisy queries by the same margin as it is on $D$. Furthermore, for all $1\leq i < k$, we have
\begin{align*}
&(q'_{j_i} +\eta'_{j_i}) - (q'_{j_{i+1}}+\eta'_{j_{i+1}})\\ 
=&(q_{j_i} +\eta_{j_i} + \max\limits_{l\in\ioc}(q_l'+\eta_l)- \max\limits_{l\in\ioc}(q_l+\eta_l)) \\
&\hspace{12mm}- (q_{j_{i+1}}+\eta_{j_{i+1}} + \max\limits_{l\in\ioc}(q_l'+\eta_l)- \max\limits_{l\in\ioc}(q_l+\eta_l))\\
=&(q_{j_i} +\eta_{j_i}) - (q_{j_{i+1}}+\eta_{j_{i+1}}) = g_{i}>0.
\end{align*}
In other words, the query with index $j_i$ is still the $i^\text{th}$ largest query on $D'$ by the same margin.
Thus $M(D', H') = \omega$.

The local alignments are clearly acyclic (any permutation that puts $\ioc$ before $\io$ does the trick). Also, note that $\alio$ only depends on $\omega$ through $\io $ (the indexes of the $k$ largest queries). There are $n$ queries and therefore $\binom{n}{k} = \frac{n!}{(n-k)!k!}$ distinct $\alio$.
\end{proof}
}
\paragraph*{Alignment cost and privacy}
To establish the alignment cost, we need the following lemma and definition.
\hl{
\begin{lemma}\label{lem:maxsen}
Let $(x_1,\ldots, x_m), (x'_1,\ldots, x'_m)\in \RR^m$ be such that $\forall  i, \abs{x_i-x'_i}\leq 1$. Then $\abs{\max_i \set{x_i} - \max_i \set{x'_i}}\leq 1$.
\end{lemma}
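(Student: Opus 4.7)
The plan is to use the standard argmax comparison trick. First I would let $i^\star \in \arg\max_i x_i$ and $j^\star \in \arg\max_i x'_i$ be indices achieving the respective maxima. Then by the hypothesis $|x_i - x'_i| \leq 1$ applied at $i = i^\star$, we have $x_{i^\star} \leq x'_{i^\star} + 1$, and by the definition of $j^\star$ as maximizer of $(x'_i)$, we have $x'_{i^\star} \leq x'_{j^\star}$. Chaining these gives $\max_i x_i = x_{i^\star} \leq x'_{j^\star} + 1 = \max_i x'_i + 1$.

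The argument is symmetric in the two vectors: applying the same reasoning with the roles of $(x_i)$ and $(x'_i)$ swapped (using $|x'_i - x_i| \leq 1$ at $i = j^\star$ and then $x_{j^\star} \leq x_{i^\star}$) yields $\max_i x'_i \leq \max_i x_i + 1$. Combining the two bounds gives $|\max_i x_i - \max_i x'_i| \leq 1$, as required. There is no real obstacle here—the proof is essentially two lines and relies only on the definition of maximum together with the componentwise bound; no induction on $m$ or continuity argument is needed.
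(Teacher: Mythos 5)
Your proof is correct and uses essentially the same argument as the paper: compare the two maxima via the maximizing indices and the componentwise bound. The paper phrases it with a ``without loss of generality'' assumption ($x_s \ge x'_t$) and a single chain $x_s \ge x'_t \ge x'_s \ge x_s - 1$, while you spell out both directions explicitly; the content is identical.
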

\begin{proof}
Let $s$ be an index that maximizes $x_i$ and let $t$ be an index that maximizes $x'_i$. Without loss of generality, assume $x_{s} \geq x'_{t}$. Then
$x_s \geq x'_t \geq x'_s \geq x_s-1$. Hence
$ \abs{x_s - x'_t} = x_s - x'_t \leq x_s - (x_s-1) = 1.$
\end{proof}
}
\hl{
\begin{definition}[Monotonicity]\label{def:mono}
A list of numerical queries $\vq=(q_1, q_2,\ldots)$ is monotonic if for all pair of adjacent databases $D\sim D'$ we have either $\forall i: q_i(D) \leq q_i(D')$, or $\forall i: q_i(D) \geq q_i(D')$.
\end{definition}
Counting queries are clearly monotonic. Now we can establish the privacy property of our algorithm.}
\hl{
\begin{theorem}\label{thm:gaptopk_cost}
The \gaptopk mechanism satisfies $\epsilon$-differential privacy. If all of the queries are counting queries, then it satisfies $\epsilon/2$-differential privacy.
\end{theorem}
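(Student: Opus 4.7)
The plan is to apply Lemma~\ref{lem:alignmentbound} with the local alignments $\alio$ constructed in Lemma~\ref{lem:topk_align}. Conditions (i)--(iii) of Lemma~\ref{lem:alignmentbound} are immediate: the algorithm executes a fixed-length loop over the $n$ queries and then does bounded work, so it always terminates; the number of noise variables consumed is exactly $n$, which can be read off from the input dimension (or from the output, given that $k$ is known); and each $\eta_i$ is drawn independently from $\lap(2k/\epsilon)$ with $\alpha_i = 2k/\epsilon$. Lemma~\ref{lem:topk_align} already gives condition (v) and the acyclicity half of condition (iv), so the real work is to show $\cost(\alio)\leq \epsilon$, and $\leq \epsilon/2$ when all queries are counting (more generally, monotonic).

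For the general bound, I would read off from Equation~\eqref{eq:gaptopk_align} that $\eta'_i = \eta_i$ for every $i\in\ioc$, so only the $k$ indices in $\io$ contribute to the cost. For each $i \in \io$,
\begin{equation*}
\abs{\eta_i - \eta'_i} \;\leq\; \abs{q_i - q'_i} \;+\; \Bigl|\max_{l\in\ioc}(q'_l+\eta_l) - \max_{l\in\ioc}(q_l+\eta_l)\Bigr|.
\end{equation*}
The first term is at most $1$ because each query has sensitivity $1$. For the second term, $|(q'_l+\eta_l) - (q_l+\eta_l)| = |q_l-q'_l|\leq 1$, so Lemma~\ref{lem:maxsen} gives a bound of $1$. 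Hence $\abs{\eta_i - \eta'_i} \leq 2$ for each of the $k$ aligned coordinates, and
\begin{equation*}
\cost(\alio) \;=\; \sum_{i\in\io} \frac{\epsilon}{2k}\,\abs{\eta_i - \eta'_i} \;\leq\; \frac{\epsilon}{2k}\cdot k\cdot 2 \;=\; \epsilon.
\end{equation*}
All five conditions of Lemma~\ref{lem:alignmentbound} then yield $\epsilon$-differential privacy.

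For the monotonic / counting-query improvement, I would exploit the fact that by Definition~\ref{def:mono} the quantities $q_i - q'_i$ all share a common sign across $i$. Without loss of generality assume $q_i \leq q'_i$ for every $i$; then $q_i - q'_i \in [-1,0]$, while $\max_{l\in\ioc}(q'_l+\eta_l) - \max_{l\in\ioc}(q_l+\eta_l)$ is nonnegative (since shifting each coordinate up does not decrease the maximum) and at most $1$ by Lemma~\ref{lem:maxsen}. The two terms in Equation~\eqref{eq:gaptopk_align} therefore have opposite signs and each lies in $[-1,1]$, so they partially cancel and $\abs{\eta_i - \eta'_i} \leq 1$. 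Summing gives $\cost(\alio) \leq \frac{\epsilon}{2k}\cdot k\cdot 1 = \epsilon/2$, and the symmetric case $q_i \geq q'_i$ is handled identically by flipping signs.

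The main obstacle is the cancellation argument in the monotonic case: one must notice that the ``query shift'' $q_i - q'_i$ and the ``max shift'' of the losing queries move in opposite directions under monotonicity, which is what yields the factor-of-two improvement. Everything else is a routine verification of the hypotheses of Lemma~\ref{lem:alignmentbound} combined with the sensitivity-of-max bound in Lemma~\ref{lem:maxsen}.
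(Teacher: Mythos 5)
Your proposal is correct and follows essentially the same route as the paper: apply Lemma~\ref{lem:alignmentbound}, observe that only the $k$ coordinates in $\io$ contribute, bound each contribution by $2$ via the triangle inequality combined with the unit sensitivity and Lemma~\ref{lem:maxsen}, and in the monotonic case note that $q_i - q'_i$ and the shift in $\max_{l\in\ioc}(q_l+\eta_l)$ have opposite signs so the combined displacement stays in $[-1,1]$. The only cosmetic difference is that the paper keeps the sum inside a single absolute value before invoking the triangle inequality, whereas you split it first; the cancellation argument for the monotonic improvement is identical.
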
}
\hl{
\begin{proof} First we bound the cost of the alignment function defined in \eqref{eq:gaptopk_align}. Recall that
the $\eta_i$'s are independent $\lap(2k/\epsilon)$ random variables. By Definition \ref{def:alignmentcost}
\begin{align*}
\cost&(\alio) =\sum_{i=1}^\infty\abs{\eta_i'-\eta_i} \frac{\epsilon}{2k}\\
=&\frac{\epsilon}{2k}\sum_{i\in\cI_\omega}\abs{q_i-q_i'+ \max\limits_{l\in\ioc}(q_l'+\eta_l)-  \max\limits_{l\in\ioc}(q_l+\eta_l)}.
\end{align*}
By the global sensitivity assumption we have $\abs{q_i-q_i'}\leq 1$.
Apply Lemma \ref{lem:maxsen} to the vectors $(q_l+\eta_l)_{l\in\ioc}$ and $(q'_l+\eta_l)_{l\in\ioc}$, we have $\abs{ \max\limits_{l\in\ioc}(q_l'+\eta_l)-  \max\limits_{l\in\ioc}(q_l+\eta_l)}\leq 1$. Therefore,
\begin{align*}
  &\abs{q_i-q_i'+ \max\limits_{l\in\ioc}(q_l'+\eta_l)-  \max\limits_{l\in\ioc}(q_l+\eta_l)} \\
  \leq& \abs{q_i-q_i'} + \abs{ \max\limits_{l\in\ioc}(q_l'+\eta_l)-  \max\limits_{l\in\ioc}(q_l+\eta_l)} \leq 1+1 =2.  
\end{align*}
Furthermore, if $\vq$ is monotonic, then 
\begin{itemize}[leftmargin = 5mm]
    \item either $\forall i: q_i \leq q'_i$ in which case $q_{i}-q'_{i}\in [-1,0]$ and $\max\limits_{l\in\cI_\omega^c}(q_l'+\eta_l)-  \max\limits_{l\in\cI_\omega^c}(q_l+\eta_l)\in [0,1]$,
    \item or $\forall i: q_i \geq q'_i$ in which case $q_{i}-q'_{i}\in [0,1]$ and $\max\limits_{l\in\cI_\omega^c}(q_l'+\eta_l)-  \max\limits_{l\in\cI_\omega^c}(q_l+\eta_l)\in [-1,0]$.
\end{itemize}
In both cases we have $q_{i}-q_{i}'+ \max\limits_{l\in\cI_\omega^c}(q_l'+\eta_l)-  \max\limits_{l\in\cI_\omega^c}(q_l+\eta_l) \in [-1,1]$ so $|q_{i}-q_{i}'+ \max\limits_{l\in\cI_\omega^c}(q_l'+\eta_l)-  \max\limits_{l\in\cI_\omega^c}(q_l+\eta_l)|\leq 1$.
Therefore,
\begin{align*}
\cost(\alio)
=&\frac{\epsilon}{2k}\sum_{i\in\cI_\omega}\abs{q_i-q_i'+ \max\limits_{l\in\ioc}(q_l'+\eta_l)-  \max\limits_{l\in\ioc}(q_l+\eta_l)}\\
\leq& \frac{\epsilon}{2k}\sum_{i\in\cI_\omega} 2 \quad (\textrm{or } \frac{\epsilon}{2k}\sum_{i\in\cI_\omega} 1 \textrm{ if } \vq \textrm{ is monotonic})\\
=& \frac{\epsilon}{2k}\cdot 2 \abs{\io} \quad (\textrm{or } \frac{\epsilon}{2k}\cdot \abs{\io} \textrm{ if } \vq \textrm{ is monotonic}) \\
=& \epsilon \quad (\textrm{or } {\epsilon}/{2} \textrm{ if } \vq \textrm{ is monotonic}).
\end{align*}
Conditions (i) through (iii) of Lemma \ref{lem:alignmentbound} are trivial to check, (iv) and (v) follow from Lemma \ref{lem:topk_align} and the above bound on cost. Therefore, Theorem \ref{thm:gaptopk_cost} follows from Lemma \ref{lem:alignmentbound}.
\end{proof}}





\subsection{Utilizing Gap Information}\label{sec:blue}
Let us consider one scenario that takes advantage of the gap information. Suppose a data analyst is interested in the identities and values of the top $k$ queries. A typical approach would be to split the privacy budget $\epsilon$ in half -- use $\epsilon/2$ of the budget to identify the top $k$ queries using \gaptopk. The remaining $\epsilon/2$ budget is evenly divided between the selected queries and is used to  obtain noisy measurements  (i.e. add Laplace$(2k/\epsilon)$ noise to each query answer). These measurements will have variance $\sigma^2=8k^2/\epsilon^2$. In this section we show how to use the gap information from \gaptopk and postprocessing to improve the accuracy of these measurements.

\hl{
\paragraph*{Problem statement}
Let $q_1,\dots, q_k$ be the true answers of the top $k$ queries that are selected by Algorithm \ref{alg:gaptopk}. Let $\alpha_1,\ldots, \alpha_k$ be their noisy measurements. Let $g_1,\ldots,g_{k-1}$ be the noisy gaps between $q_1,\ldots,q_k$ that are obtained from Algorithm \ref{alg:gaptopk} for free.
Then $\alpha_i=q_i + \xi_i$ where each $\xi_i$ is a Laplace$(2k/\epsilon)$ random variable and $g_i =  q_i  + \eta_i - q_{i+1} - \eta_{i+1}$ where each $\eta_i$ is a Laplace$(4k/\epsilon)$ random variable, or a Laplace$(2k/\epsilon)$ random variable if the query list is monotonic (recall the mechanism was run with a privacy budget of $\epsilon/2$).  Our goal is then to find the \emph{best linear unbiased estimate} (BLUE) \cite{lehmann1998} $\beta_i$ of $q_i$ in terms of the measurements $\alpha_i$ and gap information $g_i$.}

%

\hl{
\begin{theorem}\label{thm:blue} With notations as above let $\vq=[q_1, \ldots, q_k]^T$, $\valpha=[\alpha_1, \ldots, \alpha_k]^T$ and $\vg=[g_1, \ldots, g_{k-1}]^T$. Suppose the ratio $\var(\xi_i) : \var(\eta_i)$ is equal to $1:\lambda$.
Then the BLUE of $\vq$ is $\vbeta = \frac{1}{(1+\lambda)k} (X\valpha + Y\vg)$ where
\[
X = \begin{bmatrix}
1+\lambda k & 1 & \cdots & 1 \\
1 & 1+\lambda k & \cdots & 1 \\
\vdots & \vdots & \ddots & \vdots \\
1 & 1 & \cdots & 1+\lambda k \\
\end{bmatrix}_{k\times k}
\]
\[
Y = \left(\begin{bmatrix}
k-1 & k-2 & \cdots &1 \\
k-1 & k-2 & \cdots &1 \\
k-1 & k-2 & \cdots &1 \\
\vdots & \vdots & \ddots &\vdots\\
k-1 & k-2 & \cdots & 1 \\
\end{bmatrix} - 
\begin{bmatrix}
0 & 0 & \cdots &0 \\
k & 0 & \cdots &0 \\
k & k & \cdots &0 \\
\vdots & \vdots & \ddots &0 \\
k & k & \cdots &k \\
\end{bmatrix}
\right)_{k\times (k-1)}
\]
\end{theorem}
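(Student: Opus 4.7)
The plan is to frame the problem as a generalized least squares (GLS) fit and invoke the Gauss--Markov theorem. Stack the observations into $\vect{z}=[\valpha^T,\vg^T]^T$ and write $\vect{z}=A\vq+\vect{\epsilon}$ where
\[
A=\begin{bmatrix}I_k\\ B\end{bmatrix},\qquad B_{i,j}=\begin{cases}1 & j=i,\\ -1 & j=i+1,\\ 0 & \text{otherwise},\end{cases}
\]
so $B$ is the $(k-1)\times k$ first-difference matrix and $\vg=B\vq+B\veta$. The noise $\vect{\epsilon}=[\vxi^T,(B\veta)^T]^T$ has mean zero and block-diagonal covariance $\Sigma=\sigma^2\operatorname{diag}(I_k,\lambda T)$, where $T=BB^T$ is the $(k-1)\times(k-1)$ tridiagonal matrix with $2$ on the main diagonal and $-1$ immediately off, and $\sigma^2=\var(\xi_i)$. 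Since $A$ has full column rank and $T$ is invertible (it is a Dirichlet discrete Laplacian), $\Sigma$ is invertible and Gauss--Markov applies.

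Rather than computing $(A^T\Sigma^{-1}A)^{-1}A^T\Sigma^{-1}$ head-on (which would require inverting a dense $k\times k$ matrix built from $T^{-1}$), I would use the standard characterization: $\vbeta=W\vect{z}$ is the BLUE iff (a) $WA=I_k$ (unbiasedness) and (b) $W\Sigma\vect{c}=0$ for every $\vect{c}\in\ker(A^T)$ (optimality). The kernel is parameterized as $\vect{c}=\begin{bmatrix}-B^T\vect{c}_2\\ \vect{c}_2\end{bmatrix}$ with $\vect{c}_2\in\RR^{k-1}$, so writing $W=[W_1\mid W_2]$ in block form reduces (b) to $W_1B^T=\lambda W_2 T$. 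Plugging in the candidate $W=\frac{1}{(1+\lambda)k}[X\mid Y]$ collapses the two BLUE conditions into the matrix identities
\[
\text{(i)}\ X+YB=(1+\lambda)k\cdot I_k,\qquad \text{(ii)}\ XB^T=\lambda YT.
\]

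I would then verify (i) and (ii) by direct computation from the closed-form entries. For (i), decompose $X=J_k+\lambda k\,I_k$ (with $J_k$ the all-ones matrix) and note that $(YB)_{i,l}=Y_{i,l}-Y_{i,l-1}$ under the boundary convention $Y_{i,0}=Y_{i,k}=0$; a brief case split on the piecewise form $Y_{i,j}=(k-j)\mathbf{1}[j\ge i]-j\,\mathbf{1}[j<i]$ gives $YB=kI_k-J_k$, which combines with $X$ to yield (i). For (ii), compute $(XB^T)_{i,j}=X_{i,j}-X_{i,j+1}$ to see it equals $\lambda k$ at $(j,j)$, $-\lambda k$ at $(j+1,j)$, and $0$ elsewhere; then compute the tridiagonal action $(YT)_{i,j}=-Y_{i,j-1}+2Y_{i,j}-Y_{i,j+1}$ case-by-case on whether $j$ lies below, equal to, or above $i$ (with special care at $j=1$ and $j=k-1$) to obtain exactly the same pattern scaled by $1/\lambda$.

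The only real obstacle is the index bookkeeping in the case analysis for (ii) around the boundary columns of $Y$, since the tridiagonal action of $T$ mixes entries on either side of the ``switch'' at $j=i$ where $Y_{i,j}$ changes from $-j$ to $k-j$. Once both identities are in hand, Gauss--Markov immediately certifies $\vbeta$ as the unique BLUE; unbiasedness $E[\vbeta]=\vq$ is an immediate corollary of (i).
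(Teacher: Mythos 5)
Your proposal is correct and takes a genuinely different route from the paper's. The paper \emph{derives} $X$ and $Y$: it uses unbiasedness to write $\vbeta = \valpha - YN(\vxi - \veta)$, differentiates the objective $\Phi = E\|\vxi - Y\vtheta\|^2$ in $Y$ (where $\vtheta = N(\vxi-\veta)$), solves the normal equations $Y = E(\vxi\vtheta^T)\,E(\vtheta\vtheta^T)^{-1}$, and then invokes an explicit closed-form inverse of the tridiagonal matrix $E(\vtheta\vtheta^T)$ (stated without proof as ``it can be directly computed that''). You instead \emph{verify} the candidate: you cast the problem as GLS with design $A = [I_k ; B]$ and block-diagonal covariance $\sigma^2 \operatorname{diag}(I_k, \lambda T)$, and use the standard characterization of the GLS estimator $W$ via $WA=I_k$ together with orthogonality of $W\Sigma$ to $\ker(A^T)$, which collapses to the two matrix identities $X + YB = (1+\lambda)k\,I_k$ and $XB^T = \lambda YT$. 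I have checked both: $(YB)_{il} = Y_{il}-Y_{i,l-1}$ gives $YB = kI_k - J_k$, which adds to $X = J_k + \lambda k I_k$ to give $(1+\lambda)k I_k$; and $(YT)_{ij} = -Y_{i,j-1}+2Y_{ij}-Y_{i,j+1}$ (with $Y_{i,0}=Y_{i,k}=0$) equals $k$ at $i=j$, $-k$ at $i=j+1$, and $0$ otherwise, matching $XB^T = \lambda k B^T$ divided by $\lambda$. The tradeoff is that your route sidesteps the tridiagonal inversion entirely (you only need to confirm a tridiagonal \emph{product}, which is local and mechanical), at the price of being a verification rather than a derivation of the closed form. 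One small point worth making explicit in a polished write-up: the orthogonality condition $W\Sigma c = 0$ for $c\in\ker(A^T)$, together with $WA=I_k$, uniquely pins down $W=(A^T\Sigma^{-1}A)^{-1}A^T\Sigma^{-1}$ (since row-space($W\Sigma$) $\subseteq$ range($A$) forces $W = P A^T\Sigma^{-1}$ and then $WA=I_k$ determines $P$); this is what makes a two-identity check sufficient for uniqueness of the BLUE.
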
}
For proof, see the \appendixref.
Even though this is a matrix multiplication, it is easy to see that it translates into the following algorithm that is linear in $k$:
\begin{enumerate}[leftmargin=0.5cm,itemsep=0cm,topsep=0.5em,parsep=0.5em]
    \item Compute $\alpha = \sum_{i=1}^k \alpha_i$ and $p = \sum_{i=1}^{k-1} (k-i)g_i$.
    \item Set $p_0=0$. For $i=1, \ldots, k-1$ compute the prefix sum $p_i= \sum_{j=1}^i g_j = p_{i-1} + g_i$.
    \item For $i=1, \ldots, k$, set $\beta_i = 
    (\alpha + \lambda k\alpha_i + p - kp_{i-1})/(1+\lambda)k$.
\end{enumerate}

Now, each $\beta_i$ is an estimate of the value of $q_i$. How does it compare to the direct measurement $\alpha_i$ (which has variance $\sigma^2 = 8k^2/\epsilon^2$)? The following result compares the expected error of $\beta_i$ (which used the direct measurements and the gap information) with the expected error of using only the direct measurements (i.e., $\alpha_i$ only).

\hl{
\begin{corollary}\label{cor:blue_var}
For all $i=1,\ldots, k$, we have \[\frac{E(\abs{\beta_i-q_i}^2)}{E(\abs{\alpha_i-q_i}^2)} 
= \frac{1+\lambda k}{k+\lambda k } = \frac{\var(\xi_i) + k\var(\eta_i)}{k(\var(\xi_i)+\var(\eta_i))}.\]
\end{corollary}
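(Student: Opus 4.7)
The plan is to reduce both sides of the claimed ratio to variances of elementary noise variables. Since $\vbeta$ is stated to be the best linear \emph{unbiased} estimator of $\vq$ in Theorem \ref{thm:blue}, each $\beta_i$ is unbiased for $q_i$, so $E(|\beta_i - q_i|^2) = \var(\beta_i)$. The denominator is immediate: $\alpha_i - q_i = \xi_i$, hence $E(|\alpha_i - q_i|^2) = \var(\xi_i)$. The task is therefore to compute $\var(\beta_i)$ and divide by $\var(\xi_i)$.

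Rather than manipulating the matrices $X$ and $Y$ directly, I would use the scalar linear-time form displayed just after Theorem \ref{thm:blue}:
\[\beta_i = \frac{1}{(1+\lambda)k}\bigl(\alpha + \lambda k\,\alpha_i + p - k\,p_{i-1}\bigr),\]
where $\alpha = \sum_{j=1}^k \alpha_j$, $p = \sum_{j=1}^{k-1}(k-j)g_j$, and $p_{i-1} = \sum_{j=1}^{i-1} g_j$. Substituting $\alpha_j = q_j + \xi_j$ and $g_j = (q_j - q_{j+1}) + (\eta_j - \eta_{j+1})$, the two sums telescope to $p = k q_1 - \sum_j q_j + k\eta_1 - \sum_j \eta_j$ and $p_{i-1} = (q_1 - q_i) + (\eta_1 - \eta_i)$. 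After combining and cancelling all the $q_j$ and $\eta_1$ terms, the expression collapses to
\[\beta_i - q_i = \frac{1}{(1+\lambda)k}\left[(1+\lambda k)\xi_i + \sum_{j\neq i}\xi_j + (k-1)\eta_i - \sum_{j\neq i}\eta_j\right],\]
which reconfirms unbiasedness (no residual query-value terms) and is now a linear combination of independent noise variables.

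Because the $\xi_j$ and $\eta_j$ are mutually independent, the variance is the sum of squared coefficients times the corresponding individual variances:
\[\var(\beta_i) = \frac{\bigl[(1+\lambda k)^2 + (k-1)\bigr]\var(\xi_i) + k(k-1)\var(\eta_i)}{(1+\lambda)^2 k^2}.\]
Substituting $\var(\eta_i) = \lambda \var(\xi_i)$ and using the factorization $k-1 + \lambda k(k-1) = (1+\lambda k)(k-1)$, the bracket becomes $(1+\lambda k)\bigl[(1+\lambda k) + (k-1)\bigr] = (1+\lambda k)(k+\lambda k) = k(1+\lambda k)(1+\lambda)$. Cancelling a factor of $(1+\lambda)$ and a factor of $k$ gives $\var(\beta_i) = \frac{1+\lambda k}{(1+\lambda)k}\var(\xi_i)$, which is the first form of the claim; the second form follows by rewriting $\lambda = \var(\eta_i)/\var(\xi_i)$.

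The only non-mechanical step in this plan is the telescoping simplification of $p$ and $p_{i-1}$, which is what produces the short closed form for $\beta_i - q_i$; everything else is bookkeeping. An alternative would be to invoke the standard BLUE covariance formula $\cov(\vbeta) = (A^{T}\Sigma^{-1} A)^{-1}$ and extract the $i^{\text{th}}$ diagonal entry, but since the scalar algorithm is already in hand and the noise is independent, the direct computation above is cleaner and avoids inverting a $(2k-1)\times(2k-1)$ covariance.
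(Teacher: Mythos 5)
Your proof is correct and follows essentially the same route as the paper's: express $\beta_i - q_i$ as a linear combination of the independent noise variables $\xi_j$ and $\eta_j$, then sum squared coefficients times variances. The only cosmetic difference is that you reconstruct the coefficients by substituting into the scalar postprocessing algorithm and telescoping, whereas the paper reads them directly off the $i^{\text{th}}$ rows of the matrices $X$ and $Y$ and computes $\var(x_i)$ and $\var(y_i)$ separately before dividing by $k^2(1+\lambda)^2$; both end with the identical coefficient set $\bigl\{(1+\lambda k),\underbrace{1,\ldots,1}_{k-1},(k-1),\underbrace{1,\ldots,1}_{k-1}\bigr\}$ and the same algebraic simplification.
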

For proof, see the \appendixref. In the case of counting queries, we have $\var(\xi_i) = \var(\eta_i) = 8k^2/\epsilon^2$ and thus $\lambda = 1$. The error reduction rate is $\frac{k-1}{2k}$ which is close to $50\%$ when $k$ is large. Our experiments in Section~\ref{sec:experiments} confirm this theoretical result.}

\section{Improving Sparse Vector}\label{sec:improvesparse}
In this section we propose a novel variant that can answer more queries than both the original \svt \cite{diffpbook,lyu2017understanding} and the \gapsvt of Wang et al. \cite{shadowdp}. We also discuss how the free gap information can be used.

\subsection{\adaptivesvt}\label{sec:sparsegap}
The Sparse Vector techniques are designed to solve the following problem in a privacy-preserving way: given a stream of queries (with sensitivity 1), find the first $k$ queries whose answers are larger than a public threshold $T$. This is done by adding noise to the queries and threshold and finding the first $k$ queries whose noisy answers exceed the noisy threshold. Sometimes this procedure creates a feeling of regret -- if these $k$ queries are much larger than the threshold, we could have used more noise (hence consumed less privacy budget) to achieve the same result.
In this section, we show that Sparse Vector can be made adaptive -- so that it will probably use more noise (less privacy budget) for the larger queries. This means if the first $k$ queries are very large, it will still have privacy budget left over to find additional queries that are likely to be over the threshold. Our Adaptive Sparse Vector is shown in Algorithm \ref{alg:adaptivesvt}. 


\begin{algorithm}[ht]
\SetKwProg{Fn}{function}{\string:}{}
\SetKwFunction{Test}{\funcadaptivesvt}
\SetKwInOut{Input}{input}
\SetKwInOut{Output}{output}
\DontPrintSemicolon
\Input{$\vq$: a list of queries of global sensitivity 1\\
$D$: database, $\epsilon$: privacy budget, $T$: threshold\\
\hl{$k$: minimum number of above-threshold\\ \hspace{9pt} queries algorithm is able to output}}
\Fn{\Test{$\vq$, $D$, $T$, $k$, $\epsilon$}}{
\hl{$\epsilon_0 \gets\theta \epsilon$}; \hl{$\epsilon_1 \gets (1-\theta)\epsilon /k$}; 
\hl{$\epsilon_2 \gets \epsilon_1 / 2$}; $\sigma \gets \hl{4\sqrt{2}/\epsilon_2}$\;\label{line:theta}
$\eta \gets \lap(1/\epsilon_0)$\;
$\widetilde{T} \gets T + \eta$\; 
$\mathtt{cost} \gets \epsilon_0$\;
\ForEach{$\mathtt{i} \in \set{1, \cdots, \len(\vq)}$}{
\hl{$\xi_i \gets \lap(2/\epsilon_2)$; $\eta_i \gets \lap(2/\epsilon_1)$}\label{line:adaptivesvt_noise}\;
\uIf{$q_i(D) + \xi_i - \widetilde{T} \geq \sigma$ $\label{line:adaptivesvt_top_start}$}
{
\textbf{output:} ($\top$, $q_i(D) + \xi_i - \widetilde{T}$, $\mathtt{bud\_used}=\hl{\epsilon_2}$)\;
$\mathtt{cost} \gets \mathtt{cost} + \hl{\epsilon_2}$ $\label{line:adaptivesvt_top_stop}$\; 
}
\uElseIf{$q_i(D) + \eta_i - \tilde{T} \geq 0$ $\label{line:adaptivesvt_middle_start}$}{
\textbf{output:} ($\top$, $q_i(D) + \eta_i - \widetilde{T}$, $\mathtt{bud\_used}=\hl{\epsilon_1}$)\;
$\mathtt{cost} \gets \mathtt{cost} + \hl{\epsilon_1}$ $\label{line:adaptivesvt_middle_stop}$\;
}
\Else{
    \textbf{output:} ($\bot$, $\mathtt{bud\_used=0}$)\;
    }
    \lIf{$\mathtt{cost} > \epsilon - \hl{\epsilon_1}$}{\textbf{break}}
}
}
\caption{\adaptivesvt. \hl{The hyperparameter $\theta\in (0,1)$  controls the budget allocation between threshold and queries.}}
\label{alg:adaptivesvt}
\end{algorithm}

The main idea behind this algorithm is that, given a target privacy budget $\epsilon$ and an integer $k$, the algorithm will create three \hl{budget} parameters: \hl{$\epsilon_0$ (budget for the threshold), $\epsilon_1$ (baseline budget for each query) and $\epsilon_2$ (smaller alternative budget for each query, $\epsilon_2<\epsilon_1$). The privacy budget allocation between threshold and queries is controlled by a hyperparameter $\theta\in (0,1)$ on Line \ref{line:theta}. These budget parameters} are used as follows. First, the algorithm adds Laplace$(1/\epsilon_0)$ noise to the threshold and consumes $\epsilon_0$ of the privacy budget. Then, when a query comes in, the algorithm first adds a lot of noise (i.e., \hl{Laplace$(2/\epsilon_2)$}) to the query. The first ``if'' branch checks if this value is much larger than the noisy threshold (i.e. checks if the gap is $\geq \sigma$ for some\footnote{In our algorithm, we set $\sigma$ to be 2 standard deviations of the Laplace($2/\epsilon_2$) distribution.} $\sigma$). If so, then it outputs the following three items: (1) $\top$, (2) the noisy gap, and (3) the amount of privacy budget \hl{used for this query} (which is \hl{$\epsilon_2$}). The use of alignments will show that failing this ``if'' branch consumes no privacy budget. If the first ``if'' branch fails, then the algorithm adds more moderate noise to the query answer (i.e., \hl{Laplace$(2/\epsilon_1)$}). If this noisy value is larger than the noisy threshold, the algorithm outputs: (1$^\prime$) $\top$, (2$^\prime$) the noisy gap, and (3$^\prime$) the amount of privacy budget consumed (i.e., \hl{$\epsilon_1$}). If this ``if'' condition also fails, then the algorithm outputs: (1$^{\prime\prime}$) $\perp$ and (2$^{\prime\prime}$) the privacy budget consumed ($0$ in this case).

To summarize, for each query, if the top branch succeeds then the privacy budget consumed is \hl{$\epsilon_2$}, if the middle branch succeeds, the privacy cost is \hl{$\epsilon_1$}, and if the bottom branch succeeds, there is no additional privacy cost. These properties can be easily seen by focusing on the local alignment -- if $M(D,H)$ produces a certain output, how much does $H$ need to change to get a noise vector $H^\prime$ so that $M(D^\prime,H^\prime)$ returns the same exact output.

\paragraph*{Local alignment}
To create a local alignment for each pair $D\sim D^\prime$, let $H=(\eta,\xi_1,\eta_1,\xi_2,\eta_2, \ldots)$ where $\eta$ is the noise added to the threshold $T$, and $\xi_i$ (resp. $\eta_i$) is the noise that should be added to the $i^\text{th}$ query $q_i$ in Line~\ref{line:adaptivesvt_top_start} (resp. Line~\ref{line:adaptivesvt_middle_start}), if execution ever reaches that point. We view the output $\omega=(w_1,\dots,w_s)$ as a variable-length sequence where each $w_i$ is either $\perp$ or a nonnegative gap (we omit the $\top$ as it is redundant), together with a $\otag\in\set{0,\epsilon_1,\epsilon_2}$ indicating  which branch $w_i$ is from (and the privacy budget consumed to output $w_i$). 
Let $\io = \set{i\mid \otag(w_i) = \epsilon_2}$ and $\jo = \set{i\mid \otag(w_i) = \epsilon_1}$. That is, $\io$ is the set of indexes where the output is a gap from the top branch, and $\jo$ is the set of indexes where the output is a gap from the middle branch.
For $H\in\sdo$ define $\alio(H)=H'=(\eta',\xi_1',\eta_1',\xi_2', \eta_2', \ldots)$ where
\begin{equation}\label{eq:adaptivesvt_align}
    \begin{aligned}
    \eta' &= \eta + 1, \\
    (\xi_i', ~~~\eta_i') &=\begin{cases}
(\xi_i + 1 + q_i - q_i',~~~\eta_i ), &i \in\cI_\omega\\
(\xi_i,~~~ \eta_i + 1 + q_i - q_i'), &i \in\cJ_\omega\\
(\xi_i, ~~~\eta_i), & \textrm{otherwise}
\end{cases}
    \end{aligned}
\end{equation}
In other words, we add 1 to the noise that was added to the threshold (thus if the noisy $q(D)$ failed a specific branch, the noisy $q(D^\prime)$ will continue to fail it because of the higher noisy threshold). If a noisy $q(D)$ succeeded in a specific branch, we adjust the query's noise so that the noisy version of $q(D^\prime)$ will succeed in that same branch.

\hl{
\begin{lemma}\label{lem:adaptivesvt_align}
Let $M$ be the \adaptivesvt algorithm.
For all $D\sim D^\prime$ and $\omega$, the functions $\alio$ defined above are acyclic local alignments for $M$. 
Furthermore, for every pair $D\sim D^\prime$, there are countably many distinct $\alio$.
\end{lemma}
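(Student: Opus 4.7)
The plan is to verify the three requirements of the lemma in order: (1) that $\alio$ maps $\sdo$ into $\sdop$, (2) that $\alio$ is acyclic, and (3) that the set $\{\alio\mid \omega\in\Omega\}$ is countable. The main substance is step (1); steps (2) and (3) will fall out essentially for free from the structure of \eqref{eq:adaptivesvt_align}.

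For step (1), fix $D\sim D'$ and $\omega=(w_1,\dots,w_s)$, take any $H\in \sdo$, and let $H'=\alio(H)$. I would first note that $\widetilde{T}' = T+\eta' = \widetilde{T}+1$, so the noisy threshold on input $D'$ is exactly one larger than on input $D$. The proof then proceeds by inducting on $i=1,\dots,s$ and case-splitting on which branch was taken for $w_i$, aiming to show each branch behaves the same under $(D',H')$. For $i\in\io$ (top branch), the choice $\xi_i' = \xi_i + 1 + q_i - q_i'$ cancels the shifts exactly, giving $q_i(D')+\xi_i'-\widetilde{T}' = q_i(D)+\xi_i-\widetilde{T}\geq \sigma$, so the top branch still fires and outputs the same gap. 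For $i\in\jo$ (middle branch), I must additionally check that the top branch still \emph{fails} on $D'$: since $\xi_i'=\xi_i$ and the threshold rose by 1 while $q_i$ rose by at most 1, we get $q_i(D')+\xi_i-\widetilde{T}' \leq q_i(D)+\xi_i-\widetilde{T} < \sigma$, and then the middle-branch condition holds with equal gap by the same cancellation argument as before. For indices producing $\bot$, both $\xi_i$ and $\eta_i$ are unchanged, so the same inequality $q_i(D')+\xi_i-\widetilde{T}'\leq q_i(D)+\xi_i-\widetilde{T} < \sigma$ and its analogue for the middle branch show both branches still fail on $D'$. Finally, because the termination condition $\mathtt{cost} > \epsilon - \epsilon_1$ depends only on $\omega$ (via the tags), the loop halts on $D'$ after exactly the same number of iterations, so $M(D',H')=\omega$.

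The key technical point to watch is the use of the global sensitivity bound $|q_i(D)-q_i(D')|\leq 1$ in the two ``still fails'' cases, combined with the $+1$ shift of the threshold noise $\eta$: this $+1$ is precisely calibrated so that \emph{one} unit of slack absorbs one unit of query change, regardless of the sign of $q_i(D')-q_i(D)$. This is the only place where the choice of the additive constant $1$ in $\eta'=\eta+1$ matters, and it is what makes a single alignment work for both directions of adjacency.

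For step (2), acyclicity is immediate because, inspecting \eqref{eq:adaptivesvt_align}, each coordinate of $H'$ is obtained from the corresponding coordinate of $H$ by adding a number that depends only on $D$, $D'$, and $\omega$, and never on any other random variable. Thus any permutation witnesses acyclicity with all the functions $\alpsi{j}$ being constants (in particular, differentiable).

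For step (3), observe from \eqref{eq:adaptivesvt_align} that $\alio$ depends on $\omega$ only through the pair of index sets $(\io,\jo)$. Since $M$ terminates with probability one and each output $\omega$ is a finite sequence, both $\io$ and $\jo$ are finite disjoint subsets of $\NN$. The collection of all pairs of finite disjoint subsets of $\NN$ is countable, so there are countably many distinct alignment functions $\alio$ for each fixed pair $D\sim D'$. Combining these three steps completes the proof.
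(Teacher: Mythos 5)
Your proof is correct and follows essentially the same approach as the paper: you verify branch-by-branch that the aligned noise produces identical outputs at each step (exploiting the sensitivity bound together with the $+1$ threshold shift for the ``still fails'' cases), then argue that because the accumulated cost is determined by the branch tags, both executions terminate after the same number of iterations. The paper phrases the termination step slightly more explicitly by showing $\omega$ and $\omega'$ are mutual prefixes and then distinguishing the budget-exhausted case from the ran-out-of-queries case, but your lockstep invariant argument is an equivalent formulation; the acyclicity and countability observations also match.
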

}

\begin{proof}
\hl{
Pick an adjacent pair $D\sim D^\prime$ and an $\omega=(w_1,\dots,w_s)$.
For a given $H=(\eta, \xi_1,\eta_1,\dots)$ such that $M(D,H)=\omega$, let $H^\prime=(\eta^\prime,\xi_1^\prime, \eta_1^\prime, \dots) = \alio(H)$.
Suppose $M(D',H') = \omega' =(w_1', \ldots, w_{t}')$. Our goal is to show $\omega^\prime=\omega$. Choose an $i\leq \min(s,t)$. 
\begin{itemize}[leftmargin=0.5cm,itemsep=0cm,topsep=0.5em,parsep=0.5em]
    \item If $i\in\cI_\omega$, then by \eqref{eq:adaptivesvt_align} we have
\begin{align*}
    q_i' &+ \xi_i' - (T + \eta') = q_i' + \xi_i + 1 + q_i - q_i' - (T + \eta + 1) \\
    & = q_i + \xi_i -(T+\eta)  \geq \sigma.
\end{align*}
This means the first ``if'' branch succeeds in both executions and the gaps are the same . Therefore, $w_i' = w_i$.

\item If $i\in\cJ_\omega$, then by \eqref{eq:adaptivesvt_align} we have
\begin{align*}
    q_i' &+ \xi_i' - (T + \eta') = q_i' + \xi_i  - (T + \eta + 1) \\
    & = q_i' - 1 + \xi_i -(T+\eta) \leq q_i + \xi_i - (T+\eta) < \sigma,\\
    q_i' &+ \eta_i' - (T + \eta') = q_i' +\eta_i + 1 + q_i - q_i'  - (T + \eta + 1) \\
    &=  q_i + \eta_i - (T+\eta)  \geq 0.
\end{align*}
The first inequality is due to the sensitivity restriction: $\abs{q_i-q_i'}\leq 1 \implies q_i'-1\leq q_i$. These two equations mean that the first ``if'' branch fails and the second ``if'' branch succeeds in both executions, and the gaps are the same.
Hence $w_i' =  w_i$.
\item If $i\not\in \cI_\omega\cup\cJ_\omega$, then by a similar argument we have 
\begin{align*}
   q_i' + \xi_i' - (T + \eta') &\leq q_i + \xi_i - (T+\eta) < \sigma,\\
    q_i' + \eta_i' - (T + \eta') &\leq q_i + \eta_i - (T+\eta) < 0.
\end{align*}
Hence both executions go to the last ``else'' branch and $w_i' = (\bot, 0) = w_i$.
\end{itemize}
Therefore for all $1\leq i \leq \min(s,t)$, we have $w_i'=w_i$. That is, either $\omega'$ is a prefix of $\omega$, or vice versa. Let $\vq$ be the vector of queries passed to the algorithm and let $\len(\vq)$ be the number of queries it contains (which can be finite or infinity). By the termination condition of Algorithm \ref{alg:adaptivesvt} we have two possibilities.  
\begin{itemize}[leftmargin=0.5cm,itemsep=0cm,topsep=0.5em,parsep=0.5em]
\item $s=\len(\vq)$: in this case there is still enough privacy budget left after answering $s-1$ above-threshold queries, and we must have $t=\len(\vq)$ too because $M(D',H')$ will also run through all the queries (it cannot stop until it has exhausted the privacy budget or hits the end of the query sequence). 
\item $s<\len(\vq)$: in this case the privacy budget is exhausted after outputting $w_s$ and we must also have $t=s$.  
\end{itemize}
Thus $t=s$ and hence $\omega' = \omega$. 
The local alignments are clearly acyclic (e.g., use the identity permutation). 
Note that $\alio$ only depends on $\omega$ through $\io$ and $\jo$ (the sets of queries whose noisy values were larger than the noisy threshold). There are only countably many possibilities for $\io$ and $\jo$ and thus countably many distinct $\alio$.}
\end{proof}

\hl{
\paragraph*{Alignment cost and privacy}
Now we establish the alignment cost and the privacy property of Algorithm \ref{alg:adaptivesvt}.
\begin{theorem}\label{thm:adaptivesvt_cost}
The \adaptivesvt satisfies $\epsilon$-differential privacy.
\end{theorem}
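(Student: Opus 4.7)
The plan is to invoke Lemma \ref{lem:alignmentbound}, since the key machinery for \adaptivesvt is already in place. Conditions (i)--(iii) are straightforward: the mechanism terminates (either the privacy budget is exhausted or the query stream is finite), the output sequence reveals exactly how many noise variables were consumed, and all noise variables are drawn from Laplace distributions. Conditions (iv) and (v) are delivered by Lemma \ref{lem:adaptivesvt_align}, which already shows the $\alio$ from \eqref{eq:adaptivesvt_align} are acyclic local alignments and that only countably many of them arise for each pair $D\sim D^\prime$. What remains is to verify $\cost(\alio) \leq \epsilon$ for every $\omega$.

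To bound the cost I would first identify precisely which coordinates of $H$ the alignment perturbs, and by how much: $\eta$ is shifted by exactly $1$; each $\xi_i$ with $i \in \io$ is shifted by $1 + q_i - q_i'$; each $\eta_i$ with $i \in \jo$ is shifted by $1 + q_i - q_i'$; all other coordinates are untouched. The corresponding Laplace scale parameters are $1/\epsilon_0$, $2/\epsilon_2$, and $2/\epsilon_1$. Using the sensitivity bound $|q_i - q_i'| \leq 1$, so that $|1 + q_i - q_i'| \leq 2$, Definition \ref{def:alignmentcost} gives
\[
\cost(\alio) \;\leq\; 1 \cdot \epsilon_0 \;+\; \sum_{i \in \io} 2 \cdot \tfrac{\epsilon_2}{2} \;+\; \sum_{i \in \jo} 2 \cdot \tfrac{\epsilon_1}{2} \;=\; \epsilon_0 + |\io|\,\epsilon_2 + |\jo|\,\epsilon_1.
\]

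The last step is to show this right-hand side is at most $\epsilon$ by appealing to the algorithm's own accounting. The variable $\mathtt{cost}$ in Algorithm \ref{alg:adaptivesvt} tracks precisely $\epsilon_0 + |\io|\,\epsilon_2 + |\jo|\,\epsilon_1$ as execution proceeds: it is initialized to $\epsilon_0$ and incremented by $\epsilon_2$ (respectively $\epsilon_1$) each time an index joins $\io$ (respectively $\jo$). The guard on the final line breaks the loop whenever $\mathtt{cost} > \epsilon - \epsilon_1$, so at the entry of every iteration that actually executes we have $\mathtt{cost} \leq \epsilon - \epsilon_1$; since $\epsilon_2 \leq \epsilon_1$, any single iteration raises $\mathtt{cost}$ by at most $\epsilon_1$, and the final value therefore never exceeds $\epsilon$. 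Substituting into the displayed bound gives $\cost(\alio) \leq \epsilon$, and Lemma \ref{lem:alignmentbound} yields $\epsilon$-differential privacy.

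The step I expect to require the most care is the last one: one must verify that the algorithm's bookkeeping variable $\mathtt{cost}$ matches the alignment cost term for term (not just as an upper bound), and that the termination check placed \emph{after} each output is what guarantees the final $\mathtt{cost}$ lies in $(\epsilon - \epsilon_1,\, \epsilon]$ rather than exceeding $\epsilon$. Everything else is a mechanical composition of Lemma \ref{lem:adaptivesvt_align}, Definition \ref{def:alignmentcost}, and the sensitivity assumption on $\vq$.
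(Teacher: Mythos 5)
Your proposal is correct and follows essentially the same route as the paper: verify conditions (i)--(iii) of Lemma~\ref{lem:alignmentbound} directly, obtain (iv)--(v) from Lemma~\ref{lem:adaptivesvt_align}, bound $\cost(\alio) \leq \epsilon_0 + \epsilon_2|\io| + \epsilon_1|\jo|$ via the sensitivity bound $|1+q_i-q_i'|\leq 2$, and conclude via the loop-invariant $\mathtt{cost}\leq \epsilon-\epsilon_1$ plus $\epsilon_2\leq\epsilon_1$. No substantive differences.
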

\begin{proof}
Again, the only thing nontrivial is to bound the alignment cost. We use the $\epsilon_0, \epsilon_1, \epsilon_2$ and $\epsilon$ defined in Algorithm \ref{alg:adaptivesvt}. 
From \eqref{eq:adaptivesvt_align} we have 
\begin{align*}
\cost(\alio) &=\epsilon_0\abs{\eta'-\eta} + \sum_{i=1}^\infty \left(\frac{\epsilon_2}{2}\abs{\xi_i'-\xi_i} + \frac{\epsilon_1}{2}\abs{\eta_i'-\eta_i}\right)\\
&= \epsilon_0 + \sum_{i\in \cI_\omega} \frac{\epsilon_2}{2}\abs{1 + q_i - q_i'} + \sum_{i\in \cJ_\omega} \frac{\epsilon_1}{2}\abs{1 + q_i - q_i'} \\
&\leq  \epsilon_0 + \epsilon_2\abs{\cI_\omega} + \epsilon_1\abs{\cJ_\omega} \leq \epsilon.
\end{align*}
The first inequality is from sensitivity assumption: $\abs{1 + q_i - q_i'} \leq 1 + \abs{q_i - q_i'} \leq 2$. The second inequality is from loop invariant on Line 16: $\epsilon_0 + \epsilon_2\abs{\cI_\omega} + \epsilon_1\abs{\cJ_\omega} =\mathtt{cost} \leq \epsilon - \epsilon_1 + \max(\epsilon_1, \epsilon_2) = \epsilon$.
\end{proof}
}

\hl{
We note that if we remove the first branch of Algorithm \ref{alg:adaptivesvt} (Line \ref{line:adaptivesvt_top_start} through  \ref{line:adaptivesvt_top_stop}) or set $\sigma = \infty$,  we recover the \gapsvt algorithm of Wang. et al. \cite{shadowdp}.
Also, Algorithm \ref{alg:adaptivesvt} can be easily extended with multiple additional ``if'' branches. For simplicity we do not include such variations. In our setting, $\epsilon_2=\epsilon_1/2$  so, theoretically, if queries are very far from the threshold, our adaptive version of Sparse Vector will be able to find twice as many of them as the non-adaptive version. Lastly, if all queries are monotonic queries, then Algorithm \ref{alg:adaptivesvt} can be further improved: we can use $\lap(1/\epsilon_2)$ and $\lap(1/\epsilon_1)$ noises instead in Line \ref{line:adaptivesvt_noise}.\footnote{\hl{In the case of monotonic queries, if $\forall i: q_i \geq q^\prime_i$, then the alignment changes slightly: we set $\eta^\prime=\eta$ (the random variable added to the threshold) and set the adjustment to noise in winning ``if'' branches to $q_i-q^\prime_i$ instead of $1+q_i-q^\prime_i$  (hence cost terms become $|q_i-q^\prime_i|$ instead of $|1+q_i-q^\prime_i|$). If $\forall i: q_i \leq q^\prime_i$ then we keep the original alignment but in the cost calculation we note that $|1+q_i-q^\prime_i|\leq 1$ (due to the monotonicity and sensitivity).}}
}

\subsection{Utilizing Gap Information}\label{sec:gapsvt_measures}
When \gapsvt or \adaptivesvt returns a gap $\gamma_i$ for a query $q_i$, we can add to it the public threshold $T$. This means $\gamma_i+T$ is an estimate of the value of $q_i(D)$. We can ask two questions: how can we improve the accuracy of this estimate and how can we be confident that the true answer $q_i(D)$ is really larger than the threshold $T$?

\paragraph*{Lower confidence interval}
Recall that the randomness in the gap in \adaptivesvt (Algorithm \ref{alg:adaptivesvt}) is of the form  $\eta_i - \eta$ where $\eta$ and $\eta_i$ are independent zero mean Laplace variables with scale $1/\epsilon_0$ and \hl{$1/\epsilon_*$, where $\epsilon_*$ is either $\epsilon_1$ or $\epsilon_2$, depending on the branch.} The random variable $\eta_i-\eta$ has the following lower tail bound:
\begin{lemma}\label{lem:confint}
For any $t\geq 0$ we have 
\[ \PP(\eta_i - \eta \geq -t ) = \begin{cases}
1 - \frac{\epsilon_0^2 e^{-\epsilon_\ast t} - \epsilon_\ast^2e^{-\epsilon_0t}}{2(\epsilon_0^2 - \epsilon_\ast^2)} & \epsilon_0\neq \epsilon_\ast \\
1 - (\frac{2+\epsilon_0t}{4})e^{-\epsilon_0t} & \epsilon_0 = \epsilon_\ast
\end{cases}
\]
\end{lemma}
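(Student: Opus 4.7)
The plan is to reduce the problem to computing the CDF of a sum of two independent Laplace variables. By symmetry of the Laplace distribution, $-\eta$ has the same distribution as $\eta$, so the random variable $W \coloneqq \eta - \eta_i$ has the same distribution as the sum $\eta + \eta_i'$ where $\eta_i' \sim \mathrm{Lap}(1/\epsilon_*)$ is independent of $\eta$. The desired probability is then $\PP(\eta_i - \eta \geq -t) = \PP(W \leq t)$, which is just the CDF of this sum evaluated at $t \geq 0$.

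To obtain the density of $W$, I would go through characteristic functions: $\mathrm{Lap}(b)$ has characteristic function $1/(1 + b^2\xi^2)$, so $W$ has characteristic function $1/[(1+\xi^2/\epsilon_0^2)(1+\xi^2/\epsilon_*^2)]$. When $\epsilon_0 \neq \epsilon_*$, partial fractions give
\[
\frac{1}{(1+\xi^2/\epsilon_0^2)(1+\xi^2/\epsilon_*^2)} = \frac{\epsilon_0^2/(\epsilon_0^2-\epsilon_*^2)^{-1}}{1+\xi^2/\epsilon_0^2} \cdot \tfrac{1}{\epsilon_0^2-\epsilon_*^2}\cdot(-\epsilon_*^2) + \text{symmetric term},
\]
which inverts to a linear combination of the two Laplace densities. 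The density of $W$ thus has the form $A\,e^{-\epsilon_0|w|} - B\,e^{-\epsilon_*|w|}$ with explicit constants $A,B$ depending on $\epsilon_0,\epsilon_*$.

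Next I would integrate this density from $-\infty$ to $t$. By the evenness of the density, $\int_{-\infty}^{0} f_W = 1/2$, so only $\int_0^t f_W(w)\,dw$ requires work, and it reduces to the elementary integrals $\int_0^t e^{-\epsilon w}\,dw = (1-e^{-\epsilon t})/\epsilon$. Collecting terms and simplifying with the relation $\tfrac{1}{\epsilon_0^2}-\tfrac{1}{\epsilon_*^2} = (\epsilon_*^2-\epsilon_0^2)/(\epsilon_0^2\epsilon_*^2)$ yields exactly
\[
\PP(W \leq t) \;=\; 1 - \frac{\epsilon_0^2\,e^{-\epsilon_* t} - \epsilon_*^2\,e^{-\epsilon_0 t}}{2(\epsilon_0^2 - \epsilon_*^2)},
\]
which is the first case.

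For the degenerate case $\epsilon_0 = \epsilon_*$, I would simply take the limit $\epsilon_* \to \epsilon_0$ of the expression above by L'H\^opital's rule in the variable $\epsilon_*$: both the numerator $\epsilon_0^2 e^{-\epsilon_* t} - \epsilon_*^2 e^{-\epsilon_0 t}$ and the denominator $2(\epsilon_0^2-\epsilon_*^2)$ vanish at $\epsilon_* = \epsilon_0$, and differentiating each with respect to $\epsilon_*$ (giving $-t\epsilon_0^2 e^{-\epsilon_* t} - 2\epsilon_* e^{-\epsilon_0 t}$ and $-4\epsilon_*$ respectively) produces the claimed $\tfrac{2+\epsilon_0 t}{4}e^{-\epsilon_0 t}$ in the limit. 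As a sanity check, one can alternatively recognize this as $W$ being distributed as the symmetric bilateral Gamma/Bessel-K law arising from two i.i.d.\ Laplace variables, whose CDF is classical. There is no real obstacle here — the only care needed is the partial fractions bookkeeping and remembering that the integrand is even, so computing the $(-\infty,0]$ contribution for free saves a lot of algebra.
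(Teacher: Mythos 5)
Your proposal is correct but takes a genuinely different route from the paper. The paper computes the density of $\eta_i-\eta$ by directly evaluating the convolution integral $\int f_{\eta_i}(x)f_\eta(x-z)\,dx$: for $z\geq 0$ it splits into three regions $(-\infty,0]$, $[0,z]$, $[z,\infty)$ where the absolute values resolve, evaluates each piece, and then integrates the resulting density to obtain the CDF; for $\epsilon_0=\epsilon_*$ it repeats the computation from scratch. You instead pass to characteristic functions, use partial fractions to write the Fourier transform of $W$ as a linear combination of the two Laplace characteristic functions, and read off the density as the corresponding combination of Laplace densities — this skips the case analysis in the convolution integral entirely and the algebra is cleaner (one does need to check that the partial-fraction coefficients sum to $1$, which they do, so the resulting function is a signed but valid density). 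Your treatment of the degenerate case by L'H\^opital's limit $\epsilon_*\to\epsilon_0$ is also tidier than the paper's separate recomputation and makes the structural relationship between the two formulas transparent. One small caveat: the displayed partial-fraction formula in your write-up is garbled ($\epsilon_0^2/(\epsilon_0^2-\epsilon_*^2)^{-1}$ and the stray $\cdot(-\epsilon_*^2)$ factor don't parse), though the intended decomposition — coefficients $-\epsilon_*^2/(\epsilon_0^2-\epsilon_*^2)$ on the $\epsilon_0$ term and $\epsilon_0^2/(\epsilon_0^2-\epsilon_*^2)$ on the $\epsilon_*$ term — is clearly what you mean and does yield the same density $\frac{\epsilon_0\epsilon_*(\epsilon_0 e^{-\epsilon_*|w|}-\epsilon_* e^{-\epsilon_0|w|})}{2(\epsilon_0^2-\epsilon_*^2)}$ that the paper derives. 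You should clean up that display before it could be read as a finished proof.
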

For proof see the \appendixref.
For any confidence level, say 95\%, we can use this result to find a number $t_{.95}$ such that 
$\PP((\eta_i-\eta) \geq -t_{.95}) = .95$. This is a lower confidence bound, so that the true value $q_i(D)$ is $\geq$ our estimated value  $\gamma_i + T$ minus  $t_{.95}$ with probability $0.95$.

\paragraph*{Improving accuracy}
To improve accuracy, one can split the privacy budget $\epsilon$ in half. The first half $\epsilon'\equiv \epsilon/2$ can be used to run \gapsvt (or \adaptivesvt) and the second half $\epsilon''\equiv \epsilon/2$ can be used to provide an independent noisy measurement of the selected queries (i.e. if we selected $k$ queries, we add Laplace$(k/\epsilon'')$ noise to each one). Suppose the selected queries are $q_1,\dots, q_k$, the noisy gaps are $\gamma_1,\dots, \gamma_k$ and the independent noisy measurements  are $\alpha_1,\dots,\alpha_k$.
The noisy estimates can be combined together with the gaps to get improved estimates $\beta_i$ of $q_i(D)$ in the standard way (inverse-weighting by variance):
$$\beta_i = \left(\frac{\alpha_i}{\var(\alpha_i)} + \frac{\gamma_i+T}{\var(\gamma_i)}\right) \bigg/ \left(\frac{1}{\var(\alpha_i)} + \frac{1}{\var(\gamma_i)}\right).$$

As shown in ~\cite{lyu2017understanding}, the optimal budget allocation between threshold noise and query noises \emph{within} SVT (and therefore also \gapsvt) is the ratio $1:(2k)^\frac{2}{3}$. Under this setting, we have
$\var(\gamma_i) = 8(1+(2k)^\frac{2}{3})^3 /\epsilon^2$. Also, we know $\var(\alpha_i) = 8k^2/\epsilon^2$. Therefore,
$$
\frac{E( |\beta_i - q_i|^2)}{E( |\alpha_i - q_i|^2)} = 
\frac{ \var(\beta_i)}{ \var(\alpha_i)} = \frac{(1+ \sqrt[3]{4k^2})^3}{ (1+ \sqrt[3]{4k^2})^3 + k^2} < 1.
$$
Since $\lim\limits_{k\to\infty} \frac{(1+ \sqrt[3]{4k^2})^3}{ (1+ \sqrt[3]{4k^2})^3 + k^2} = \frac{4}{5}$, the improvement in accuracy approaches $20\%$ as $k$ increases.
\hl{For monotonic queries, the optimal budget allocation \emph{within} SVT is $1:k^{\frac{2}{3}}$. Then we have $\var(\gamma_i)=8(1+k^{\frac{2}{3}})^3/\epsilon^2$ and the error reduction rate is $1-\frac{(1+ \sqrt[3]{k^2})^3}{ (1+ \sqrt[3]{k^2})^3 + k^2}$ which is close to 50\% when $k$ is large. Our experiments in Section~\ref{sec:experiments} confirm this improvement.}

\section{Experiments}\label{sec:experiments}

We now evaluate the algorithms proposed in this paper.

\begin{figure*}[!ht]
\centering
\captionsetup{width=.95\linewidth}
\begin{subfigure}[t]{0.4\textwidth}
\captionsetup{width=1.1\linewidth,format=hang}
\includegraphics[width=\linewidth]{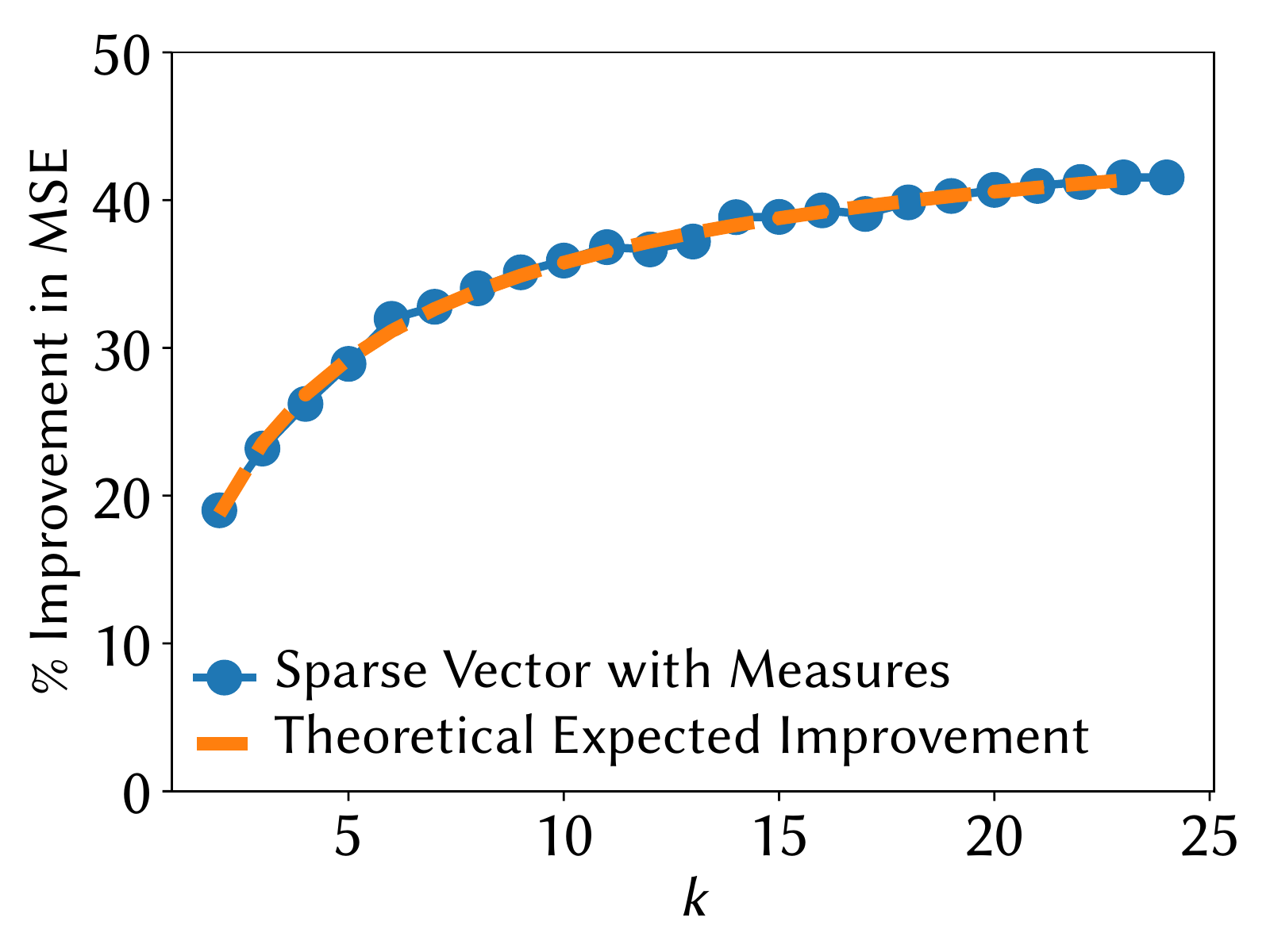} 
\caption{\svtmeasure, BMS-POS.\label{fig:svt_measure_bms_pos_counting}}
\end{subfigure}%
\hspace{0.1\textwidth}
\begin{subfigure}[t]{0.4\textwidth}
\captionsetup{width=\linewidth,format=hang}
\includegraphics[width=\linewidth]{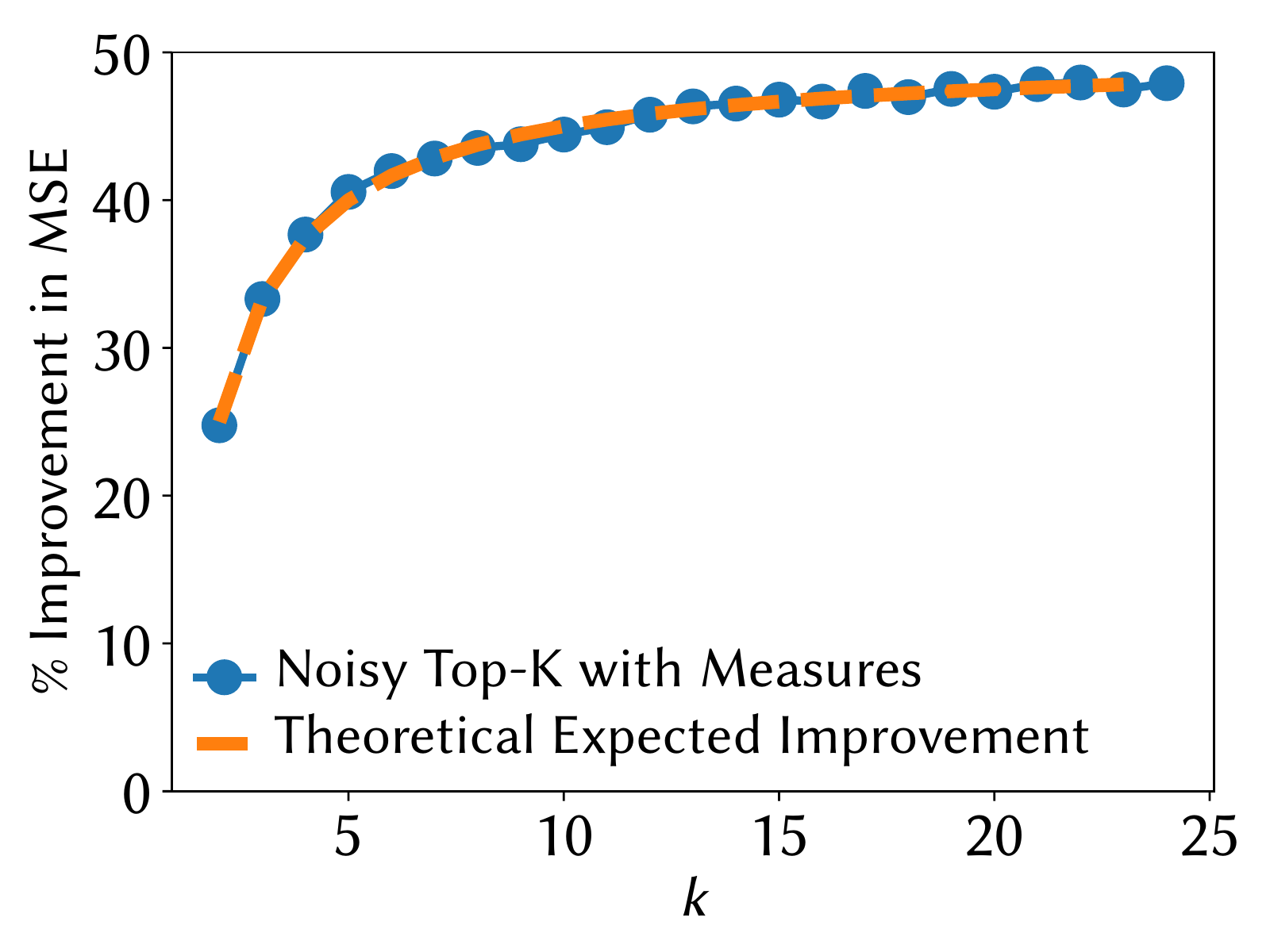} 
\caption{\topkmeasure, BMS-POS.\label{fig:noisy_topk_measure_bms_pos_counting}}
\end{subfigure}%
\caption{\hl{Improvement percentage of Mean Squared Error on monotonic queries, for different $k$, for Sparse Vector with Gap and Noisy Top-K with Gap when half the privacy budget is used for query selection and the other half is used for measurement of their answers. Privacy budget $\epsilon = 0.7$.\label{fig:noisy_topk_measure_counting} \label{fig:svt_measure_counting}}}
\end{figure*}

\begin{figure*}[!ht]
\centering
\captionsetup{width=.95\linewidth}
\begin{subfigure}[t]{0.4\textwidth}
\captionsetup{width=1.1\linewidth,format=hang}
\includegraphics[width=\linewidth]{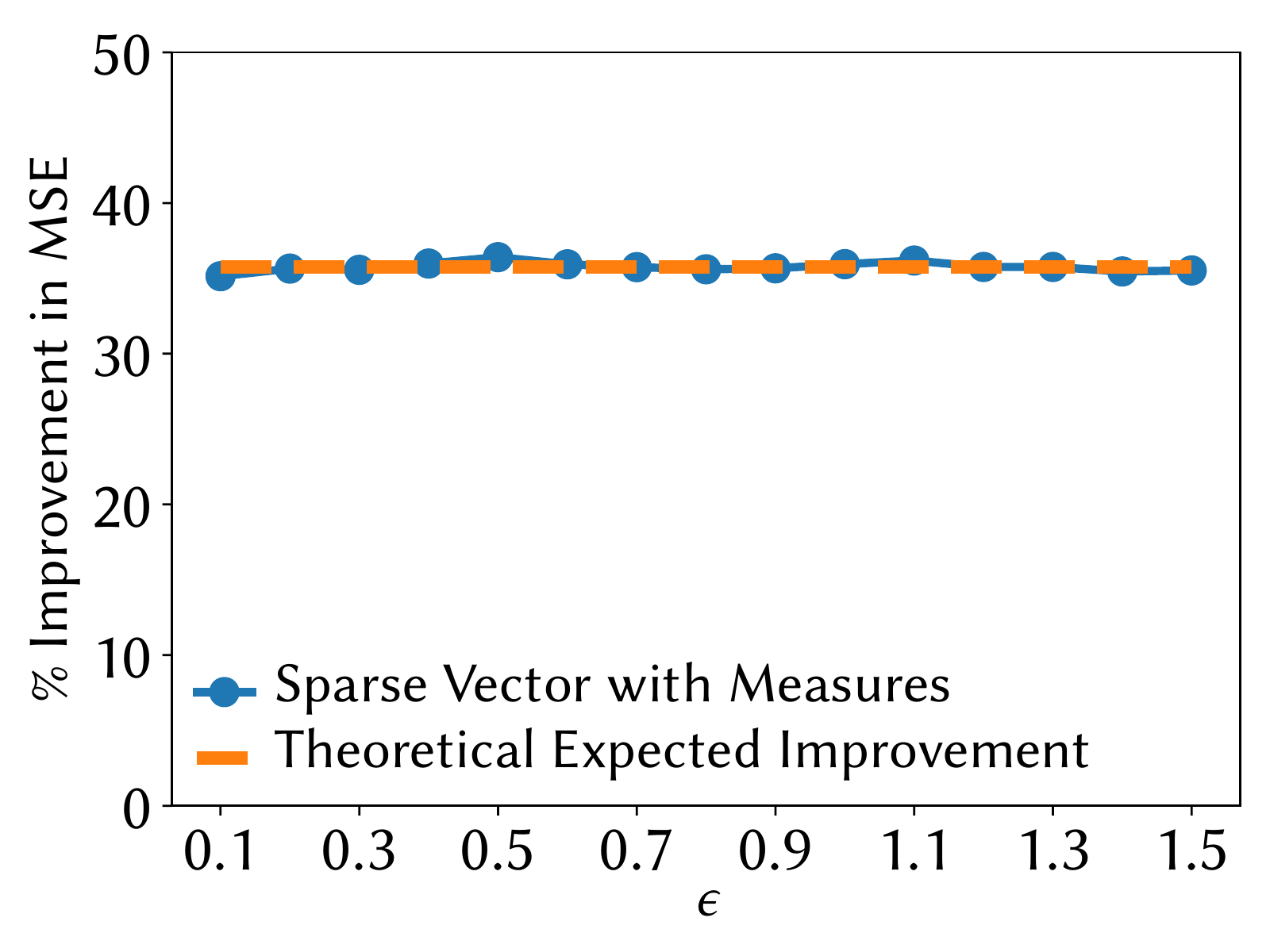} 
\caption{\svtmeasure, kosarak.\label{fig:svt_measure_kosarak_epsilons_counting}}
\end{subfigure}%
\hspace{0.1\textwidth}
\begin{subfigure}[t]{0.4\textwidth}
\captionsetup{width=1.1\linewidth,format=hang}
\includegraphics[width=\linewidth]{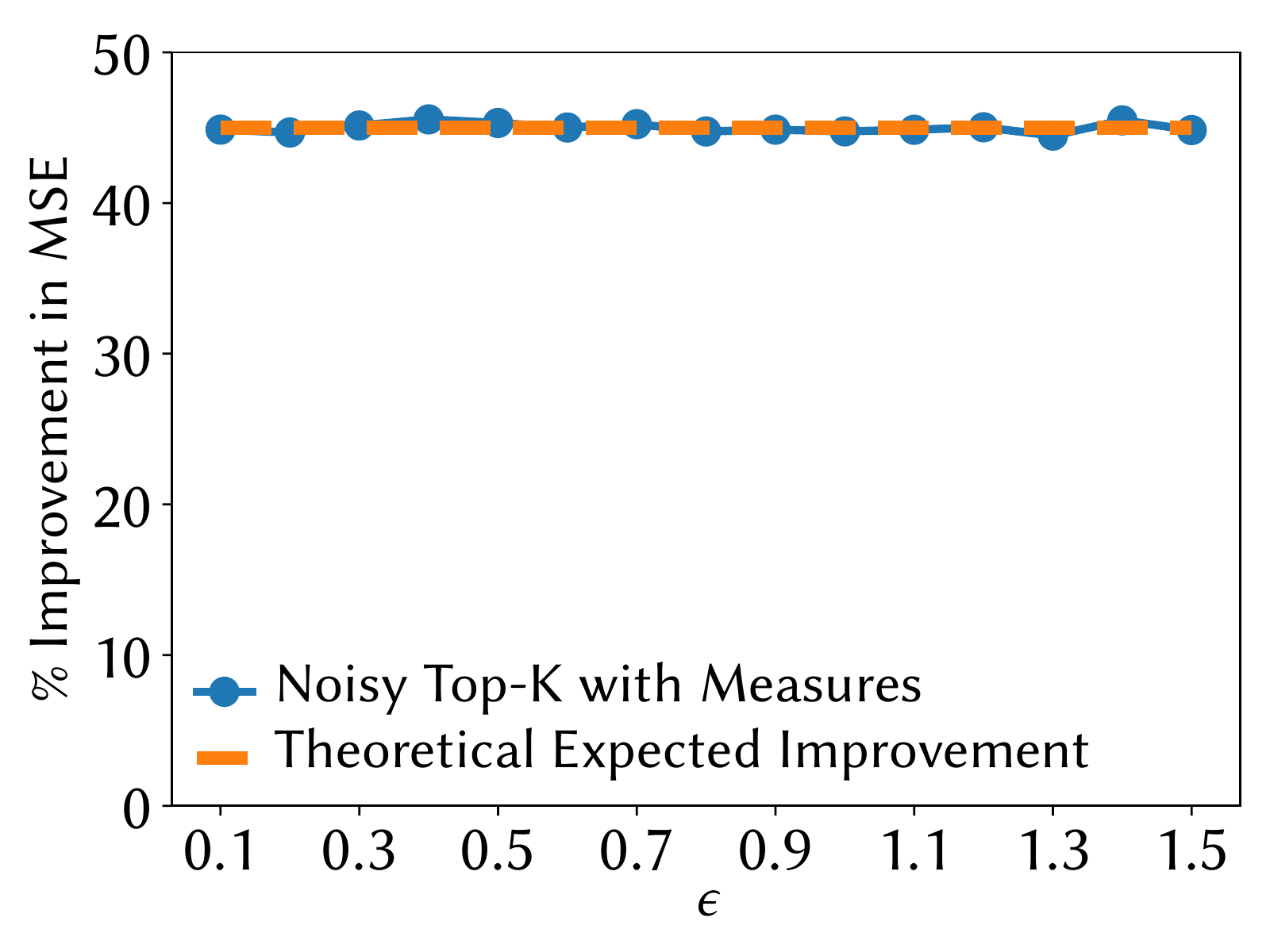} 
\caption{\topkmeasure, kosarak.\label{fig:noisy_topk_measure_kosarak_epsilons_counting}}
\end{subfigure}
\caption{\hl{Improvement percentage of Mean Squared Error on monotonic queries, for different $\epsilon$, for Sparse Vector with Gap and Noisy Top-K with Gap when half the privacy budget is used for query selection and the other half is used for measurement of their answers.  $k$ is set to 10.\label{fig:noisy_topk_measure_epsilons_counting} \label{fig:svt_measure_epsilons_counting}}}
\end{figure*}

\subsection{Datasets}

For evaluation, we used the two real datasets from \cite{lyu2017understanding}: BMP-POS, Kosarak and 
 a synthetic dataset T40I10D100K created by 
 the generator from the IBM Almaden Quest research group. These datasets are collections of transactions (each transaction is a set of items).
 In our experiments, the queries correspond to the counts of each item (i.e. how many transactions contained item \#23?)
 The statistics of the datasets are listed below. 
%
\begin{center}
\begin{tabular}{c c c} 
\Xhline{1.5\arrayrulewidth}
\textbf{Dataset} & \textbf{\# of Records} & \textbf{\# of Unique Items} \\
\hline
BMS-POS & 515,597 & 1,657 \\
Kosarak & 990,002 & 41,270\\
T40I10D100K & 100,000 & 942 \\
\Xhline{1.5\arrayrulewidth}
\end{tabular}
\end{center}




\begin{figure*}[!ht]
\centering
\captionsetup{width=.95\linewidth}
\begin{subfigure}[t]{0.325\textwidth}
\captionsetup{width=.9\linewidth,format=hang}
\includegraphics[width=\linewidth]{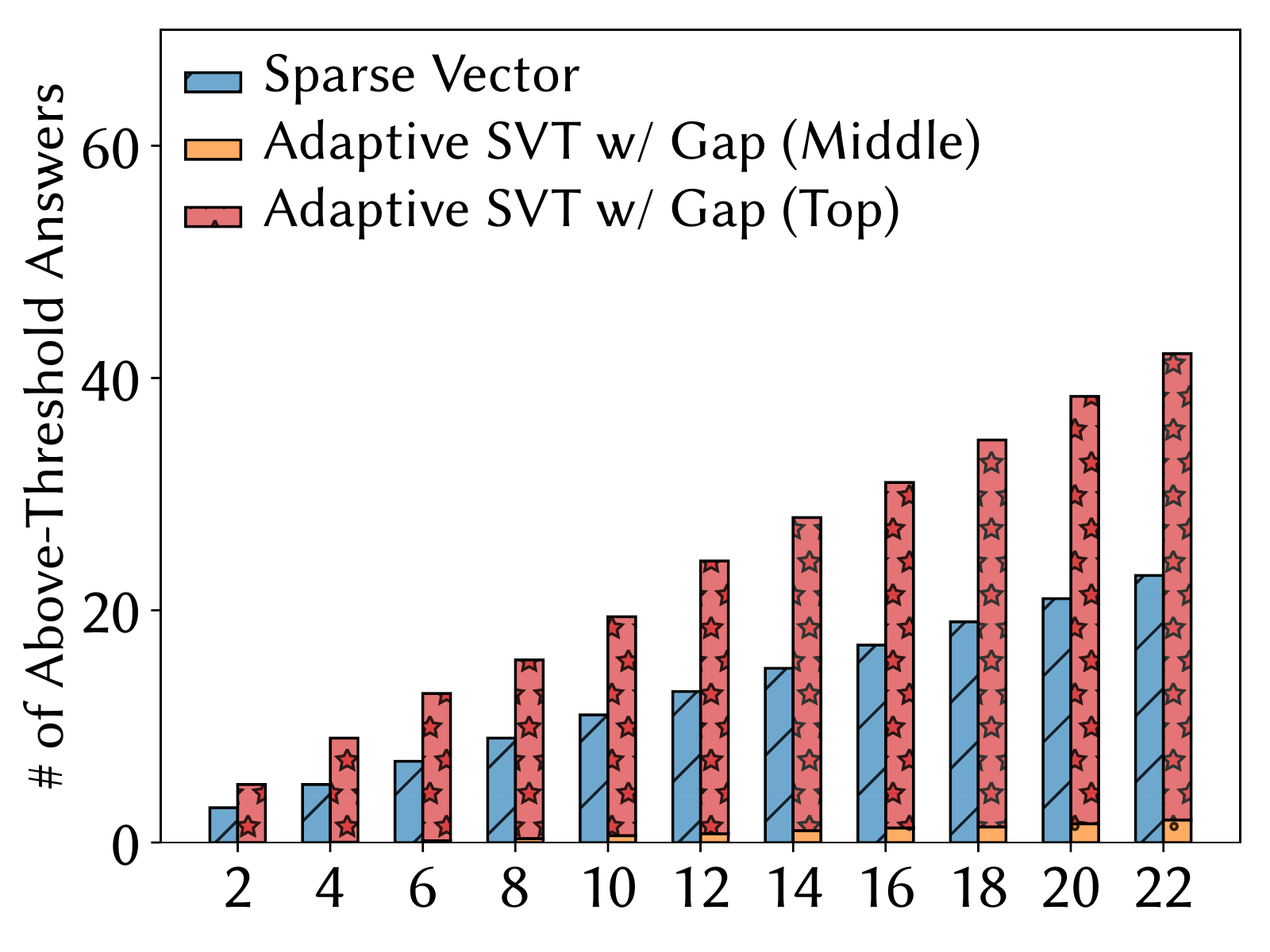}
\caption{\# of queries answered, BMS-POS.\label{fig:adaptive_ata_bms_pos_counting}}
\end{subfigure}%
\begin{subfigure}[t]{0.325\textwidth}
\captionsetup{width=.9\linewidth,format=hang}
\includegraphics[width=\linewidth]{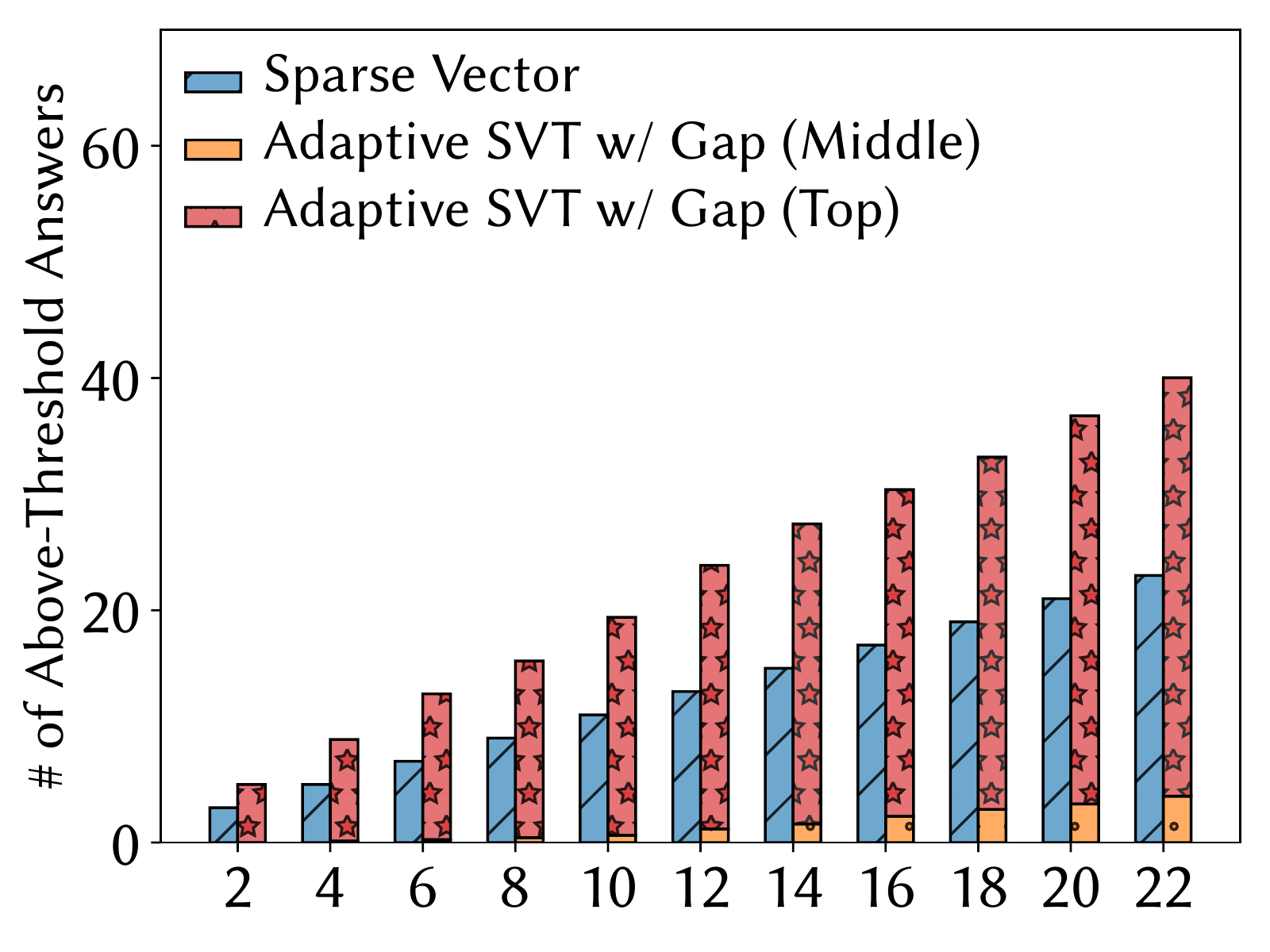}
\caption{\# of queries answered, kosarak.\label{fig:adaptive_ata_kosarak_counting}}
\end{subfigure}
\begin{subfigure}[t]{0.325\textwidth}
\captionsetup{width=.95\linewidth,format=hang}
\includegraphics[width=\linewidth]{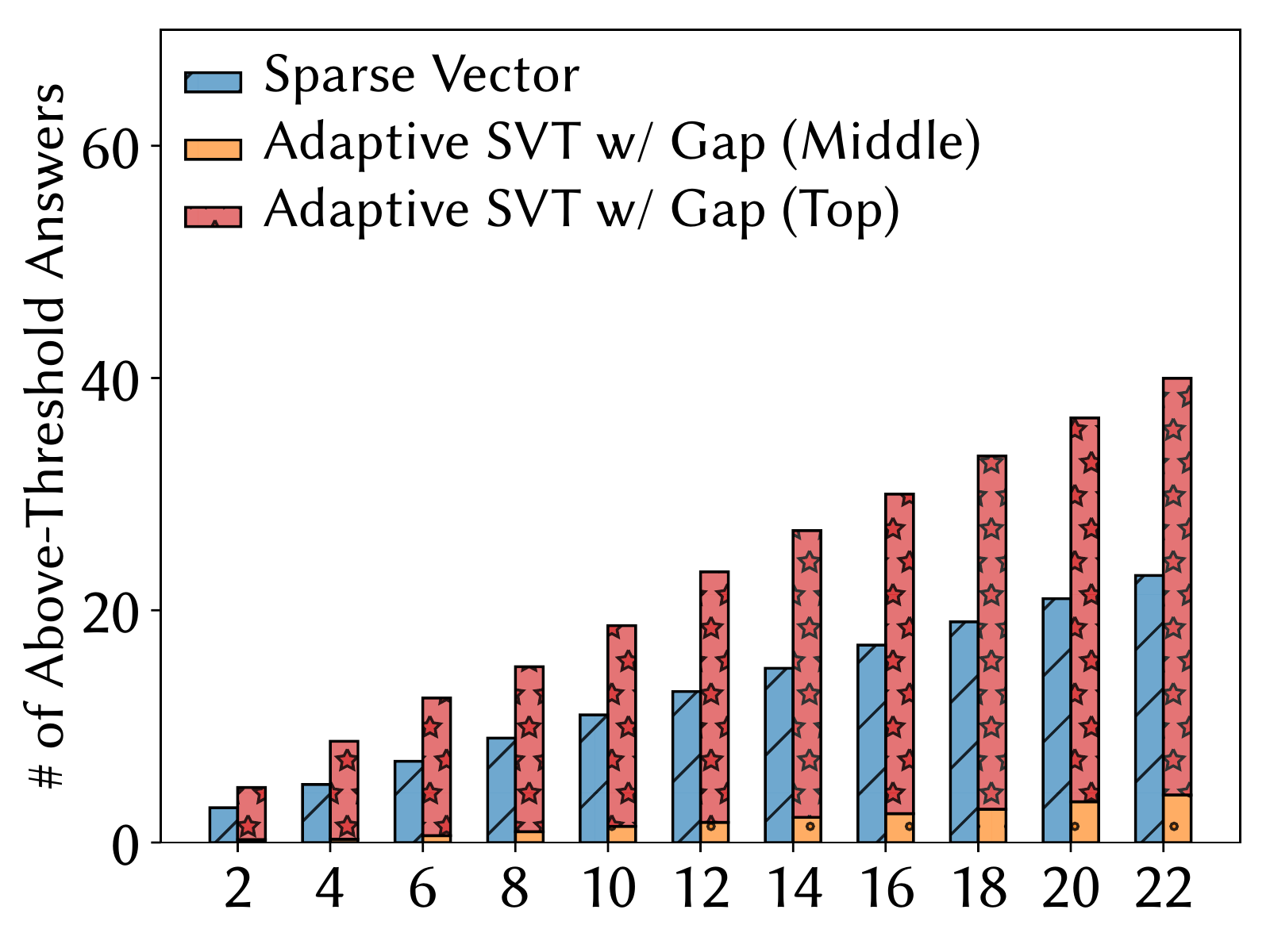}
\caption{\# of queries answered, T40I10D100K.\label{fig:adaptive_ata_t40_counting}}
\end{subfigure}%

\begin{subfigure}[t]{0.33\textwidth}
\captionsetup{width=\linewidth,format=hang}
\includegraphics[width=\linewidth]{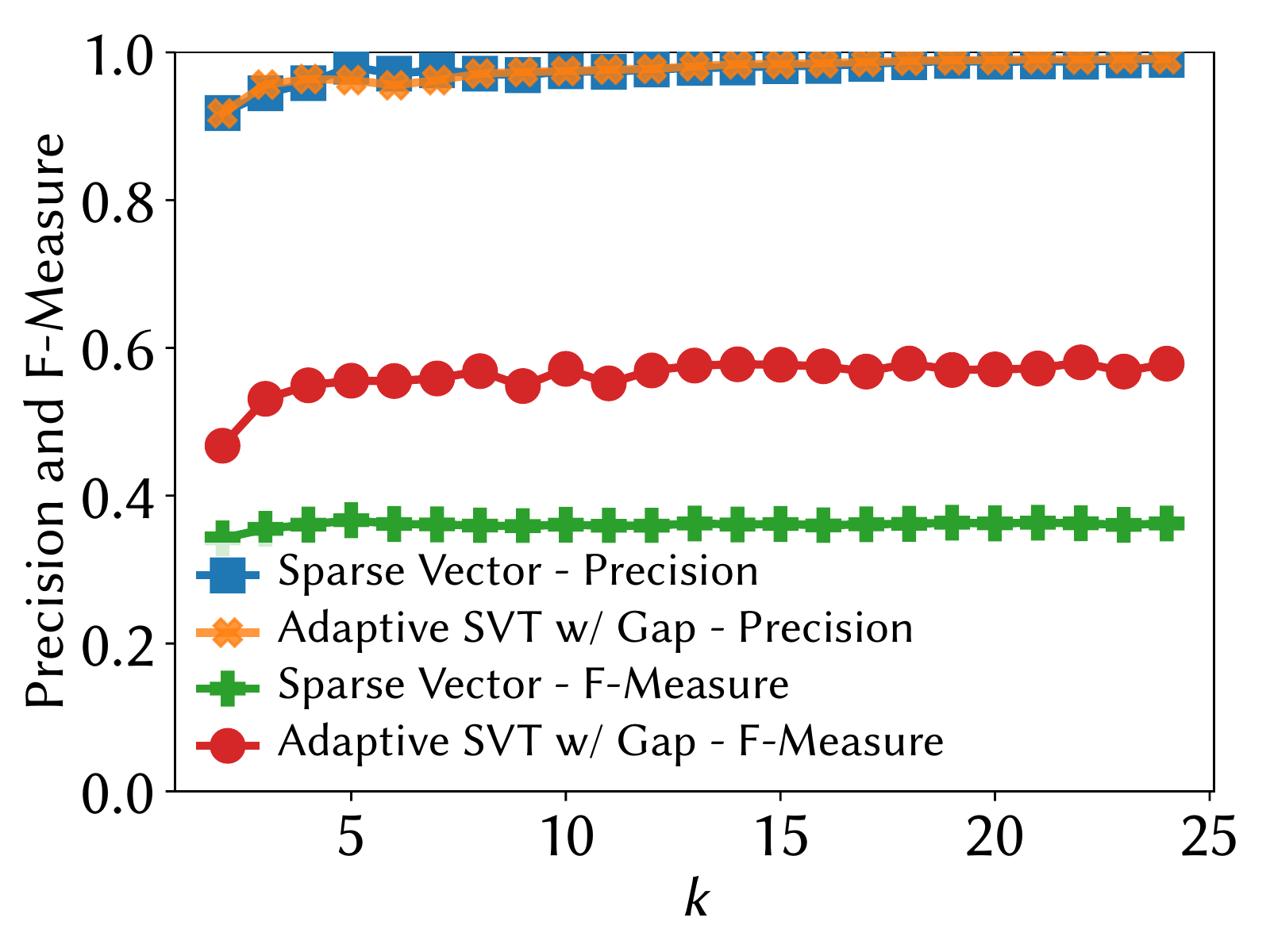}
\caption{Precision and F-Measure, BMS-POS.\label{fig:adaptive_precision_bms_pos_counting}}
\end{subfigure}%
\begin{subfigure}[t]{0.33\textwidth}
\captionsetup{width=\linewidth,format=hang}
\includegraphics[width=\linewidth]{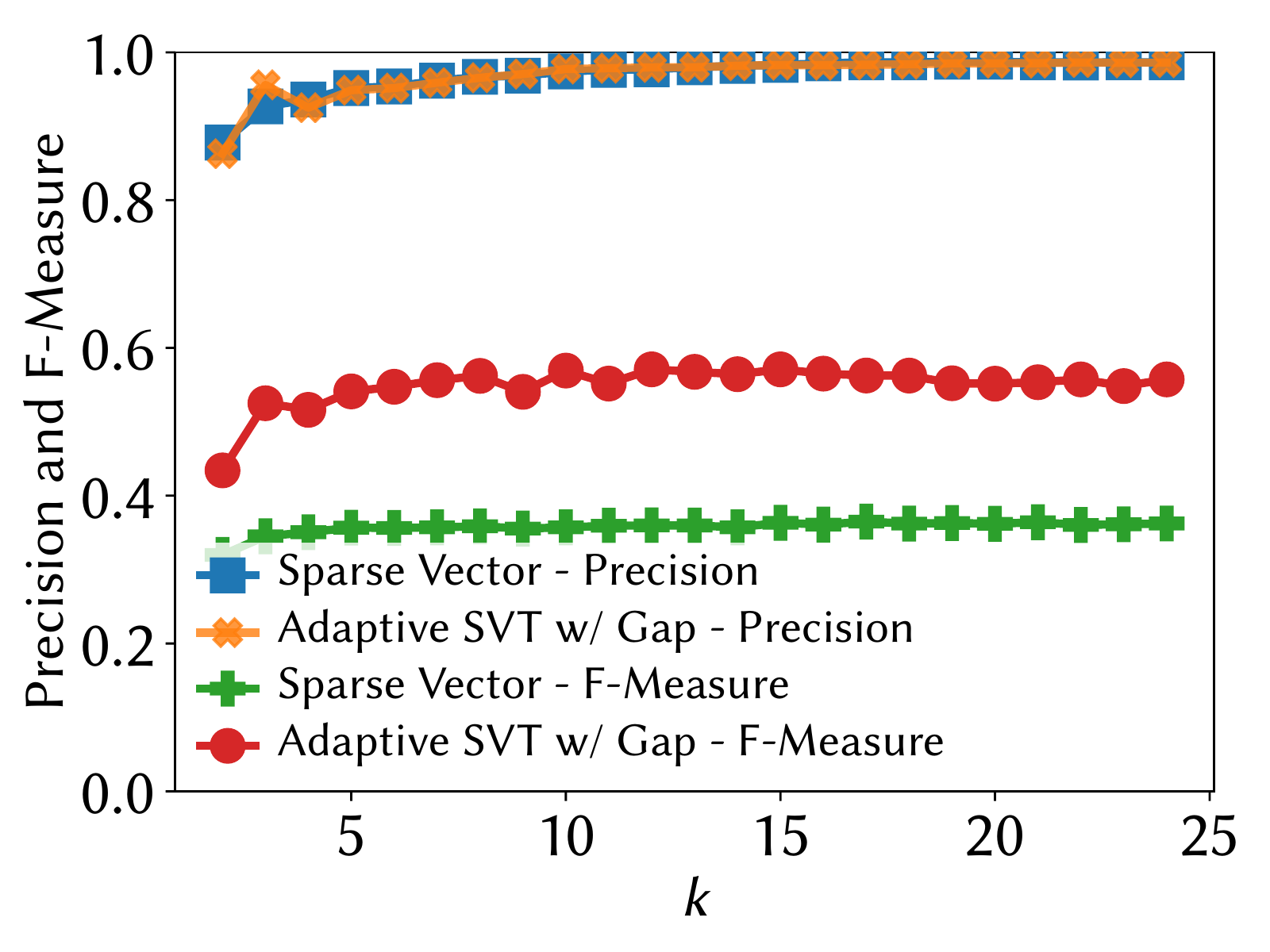}
\caption{Precision and F-Measure, kosarak.\label{fig:adaptive_precision_kosarak_counting}\\}
\end{subfigure}%
\begin{subfigure}[t]{0.33\textwidth}
\captionsetup{width=\linewidth,format=hang}
\includegraphics[width=\linewidth]{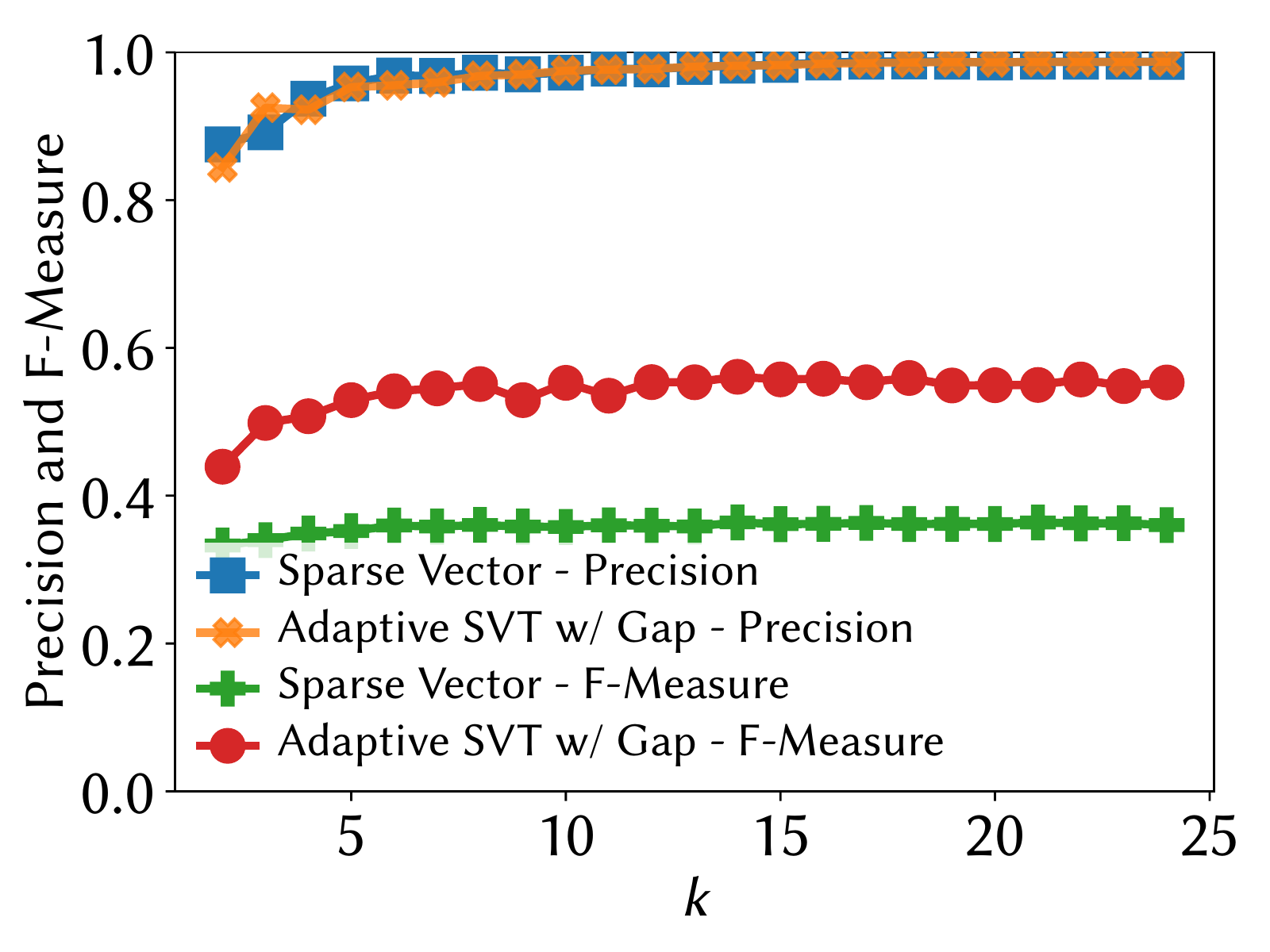}
\caption{Precision and F-Measure, T40I10D100K.\label{fig:adaptive_precision_t40_counting}}
\end{subfigure}
\caption{Results for \adaptivesvt under different $k$'s for monotonic queries. Privacy budget $\epsilon = 0.7$ and $x$-axis: $k$.\label{fig:adaptive_counting}}
\end{figure*}

\begin{figure}[!ht]
\centering
\includegraphics[scale=0.45]{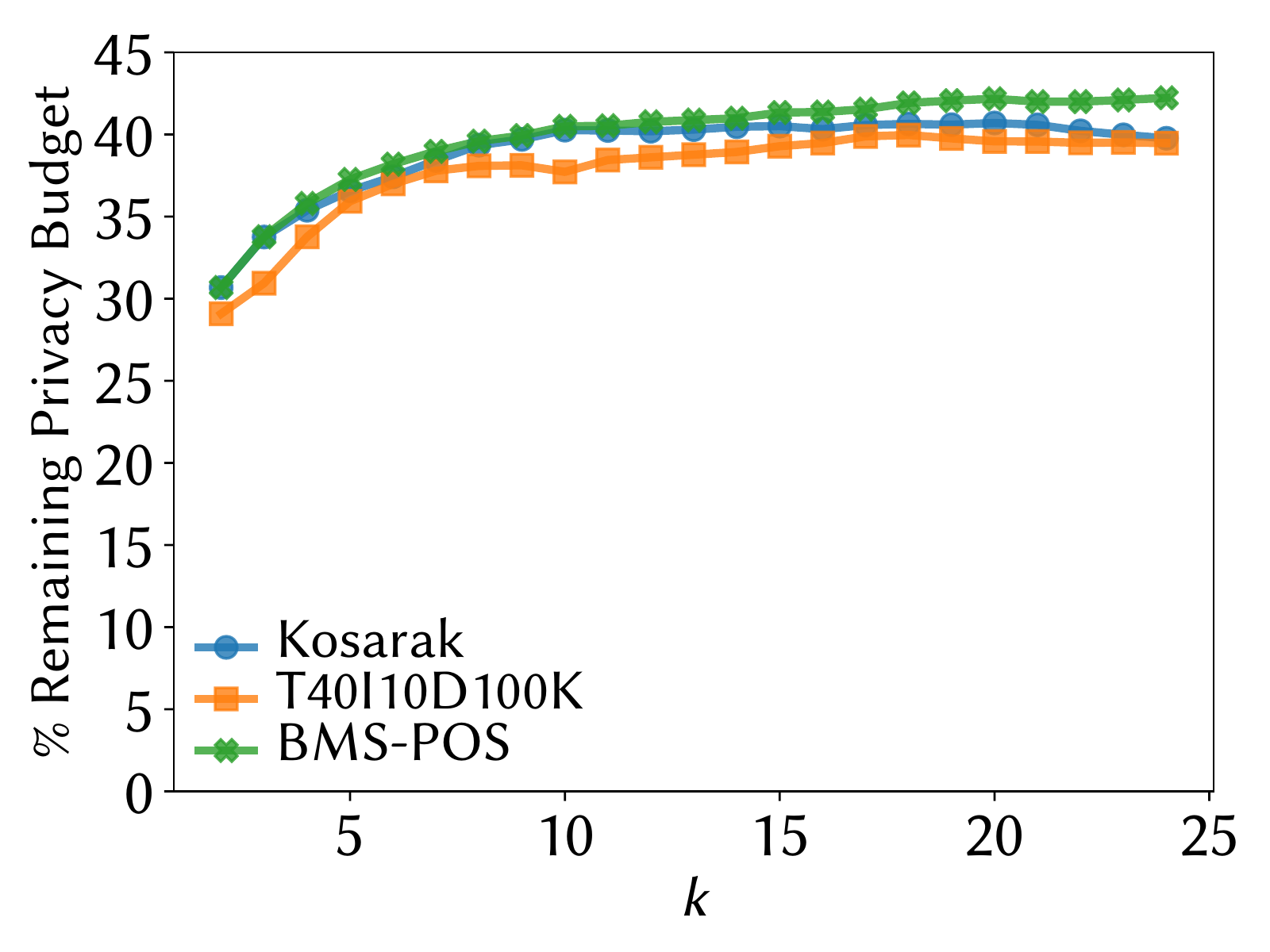}
\caption{\hl{Remaining privacy budget when \adaptivesvt is stopped after answering $k$ queries using different datasets. Privacy budget $\epsilon = 0.7$.\label{fig:adaptive_budget_kosarak_counting}}}
\end{figure}

\subsection{Gap Information + Postprocessing}\label{sec:postprocessing}
The first set of experiments is to measure how gap information can help us improve estimates in selected queries. We use the setup of Sections \ref{sec:blue} and \ref{sec:gapsvt_measures}. That is, a data analyst splits the privacy budget $\epsilon$ in half. She uses the first half to select $k$ queries -- using \gaptopk or {\gapsvt} \hl{ (or \adaptivesvt)} -- and then uses the second half of the privacy budget to obtain independent noisy measurements of each selected query. 

If one were unaware that gap information came for free, one would just use those noisy measurements as estimates for the query answers. The error of this approach is the gap-free baseline. However, since the gap information does come for free, we can use the postprocessing described in Sections \ref{sec:blue} and \ref{sec:gapsvt_measures} to improve accuracy (we call this latter approach \gapsvt with Measures and \gaptopk with Measures).

We first evaluate the percent improvement in mean squared error (MSE) of the postprocessing approach compared to the gap-free baseline and compare this improvement to our theoretical analysis.
As discussed in Section~\ref{sec:gapsvt_measures}, we set the budget allocation ratio \emph{within} the \gapsvt algorithm (i.e., the budget allocation between the threshold and queries) to be \hl{$1:k^\frac{2}{3}$ for monotonic queries and} $1:(2k)^\frac{2}{3}$ \hl{otherwise}  -- such a ratio is recommended in \cite{lyu2017understanding} for the original \svt. \hl{The threshold used for \gapsvt is randomly picked from the top $2k$ to top $8k$} in each dataset \hl{for each run}.\footnote{\hl{Selecting thresholds for SVT in experiments is difficult, but we feel this may be fairer than averaging the answer to the top $k^\text{th}$ and $k+1^\text{th}$ queries as was done in prior work  \cite{lyu2017understanding}.}} All numbers plotted are averaged over $10,000$ runs. \hl{Due to space constraints, we only show experiments for counting queries (which are monotonic).}

Our theoretical analysis in Sections \ref{sec:blue} and \ref{sec:gapsvt_measures} suggested that the improvements can reach up to \hl{50\% in case of monotonic queries (and 20\% for non-monotonic queries)} as $k$ increases. This is confirmed in Figures 
\hl{\ref{fig:svt_measure_bms_pos_counting}, 
}
for \gapsvt and Figures 
\hl{\ref{fig:noisy_topk_measure_bms_pos_counting}, 
}for our Top-K algorithm \hl{using the  BMS-POS dataset (results for the other datasets are nearly identical)}. These figures plot the theoretical and empirical percent improvement in MSE as a function of $k$ and show the power of the free gap information. 

We also generated corresponding plots where $k$ is held fixed and the total privacy budget $\epsilon$ is varied. \hl{We only present the result for the kosarak dataset as results for the other datasets are nearly identical.} For \gapsvt,  
\hl{Figures 
\ref{fig:svt_measure_kosarak_epsilons_counting}}
confirms that this improvement is stable for different $\epsilon$ values. For our Top-K algorithm, 
\hl{Figures 
\ref{fig:noisy_topk_measure_kosarak_epsilons_counting}}
confirms that this improvement is also stable for different values of $\epsilon$.

\subsection{Benefits of Adaptivity}
In this subsection we present an evaluation of \hl{the budget-saving properties of} our novel \adaptivesvt algorithm to show that it can answer more above-threshold queries than \svt and \gapsvt at the same privacy cost \hl{(or, conversely, answer the same number of queries but with leftover budget that can be used for other purposes)}. First note that \svt and \gapsvt both answer exactly the same amount of queries, so we only need to compare \adaptivesvt to the original \svt \cite{diffpbook,lyu2017understanding}. In both algorithms, the budget allocation between the threshold noise and query noise is set according to the ratio \hl{$1:k^\frac{2}{3}$ (i.e., the hyperparameter $\theta$ in \adaptivesvt is set to $1/(1+k^\frac{2}{3})$)}, following recommendations for SVT by Lyu et. al.  \cite{lyu2017understanding}.
\hl{The threshold is randomly picked from the top $2k$ to top $8k$} in each dataset and all reported numbers are averaged over $10,000$ runs.

\paragraph*{Number of queries answered} We first compare the number of queries answered by each algorithm as the parameter $k$ is varied from 2 to 25 with a privacy budget of $\epsilon=0.7$ (results for other settings of the total privacy budget are similar). The results are shown in Figure \ref{fig:adaptive_ata_bms_pos_counting}, \ref{fig:adaptive_ata_kosarak_counting}, and \ref{fig:adaptive_ata_t40_counting}. In each of these bar graphs, the left (blue) bar is the number of answers returned by \svt and the right bar is the number of answers returned by \adaptivesvt. This right bar is broken down into two components: the number of queries returned from the top ``if'' branch (corresponding to queries that were significantly larger than the threshold even after a lot of noise was added) and the number of queries returned from the middle ``if'' branch. Queries returned from the top branch of \adaptivesvt have less privacy cost than the queries returned by \svt. Queries returned from the middle branch of \adaptivesvt have the same privacy cost as in \svt.
%
We see that most queries are answered in the top branch of \adaptivesvt, meaning that the above-threshold queries were generally large (much larger than the threshold). Since \adaptivesvt uses more noise in the top branch, it uses less privacy budget to answer those queries and uses the remaining budget to provide additional answers (up to an average of 15 more answers when $k$ was set to 25). 



\paragraph*{Precision and F-Measure}
Although the adaptive algorithm can answer more above-threshold queries than the original, one can still ask the question of whether the returned queries really are above the threshold. Thus we can look at the precision of the returned results (the fraction of returned queries that are actually above the threshold) and \hl{the widely used F-Measure (the harmonic mean of precision and recall)}. One would expect that the precision of \adaptivesvt should be less than that of \svt, because the adaptive version can use more noise when processing queries. In Figures \ref{fig:adaptive_precision_bms_pos_counting}, \ref{fig:adaptive_precision_kosarak_counting}, and \ref{fig:adaptive_precision_t40_counting} we compare the precision and F-Measure of the two algorithms. Generally we see very little difference in precision. \hl{On the other hand, since \adaptivesvt answers more queries while maintaining high precision, the recall of \adaptivesvt would be much larger than \svt, thus leading to the F-Measure being roughly 1.5 times that of \svt. }


\hl{\paragraph*{Remaining Privacy Budget} If a query is large, \adaptivesvt may only need to use a small part of the privacy budget to determine that the query is likely above the noisy threshold. That is, it may produce an output in its top branch, where a lot of noise (hence less privacy budget) is used. If we stop \adaptivesvt after $k$ returned queries, it may still have some privacy budget left over (in contrast to standard versions of Sparse Vector, which use up all of their privacy budget). This remaining privacy budget can then be used for other data analysis tasks. 
For all three datasets, Figure \ref{fig:adaptive_budget_kosarak_counting} 
 shows the percentage of privacy budget that is left over when \adaptivesvt is run with parameter $k$ and stopped after $k$ queries are returned. We see that roughly 40\% of the privacy budget is left over, confirming that \adaptivesvt is able to save a significant amount of privacy budget.

}

\section{General Randomness Alignment and Proof of Lemma \lowercase{\ref{lem:alignmentbound}}}\label{subsec:incproofs}
In this section, we prove Lemma \ref{lem:alignmentbound}, which was used to establish the privacy properties of the algorithms we proposed. The proof of the lemma requires a more general theorem for working with randomness alignment functions. We explicitly list all of the conditions needed for the sake of reference (many prior works had incorrect proofs because they did not have such a list to follow).
In the general setting, the method of randomness alignment requires the following steps.
\begin{enumerate}[leftmargin=0.5cm,itemsep=0cm,topsep=0.5em,parsep=0.5em]
    \item For each pair of adjacent databases $D\sim D^\prime$ and $\omega\in\Omega$, define a randomness alignment $\ali$ or local alignment functions $\alio: \sdo\rightarrow\sdop$ (see notation in Table \ref{tab:notation}). In the case of local alignments this involves proving that if $M(D,H)=\omega$ then $M(D^\prime, \alio(H))=\omega$.

    \item Show that $\ali$ (or all the $\alio$) is  one-to-one (it does not need to be onto). That is, if we know $D,D^\prime,\omega$ and we are given the value $\ali(H)$ (or $\alio(H)$), we can obtain the value $H$. 
    
    \item For each pair of adjacent databases $D\sim D^\prime$, bound the \emph{alignment cost} of $\ali$ ($\ali$ is either given or constructed by piecing together the local alignments). Bounding the alignment cost means the following: If $f$ is the density (or probability mass) function of $H$, find a constant $a$ such that $f(H)/f(\ali(H))\leq a$ for all $H$ (except a set of measure 0). In the case of local alignments, one can instead  show the following. For all $\omega$, and adjacent $D\sim D^\prime$ the ratio $f(H)/f(\alio(H))\leq a$ for all $H$ (except on a set of measure 0). 
    
    \item Bound the \emph{change-of-variables cost} of $\ali$ (only necessary when $H$ is not discrete). One must show that the Jacobian  of $\ali$, defined as $J_{\ali} =\frac{\partial \ali}{\partial H}$, exists (i.e. $\ali$ is differentiable) and is continuous except on a set of measure 0. Furthermore,  for all pairs $D\sim D^\prime$, show the quantity $\abs{\det J_{\ali}}$ is lower bounded by some constant $b > 0$. If $\ali$ is constructed by piecing together local alignments $\alio$ then this is equivalent to showing the following (i) $\abs{\det J_{\alio}}$ is lower bounded by some constant $b > 0$ for every $D\sim D^\prime$ and $\omega$; and (ii) for each $D\sim D^\prime$, the set $\Omega$ can be partitioned into countably many disjoint measurable sets $\Omega=\bigcup_i \Omega_i$ such that whenever $\omega$ and $\omega^*$ are in the same partition, then $\alio$ and $\alios$ are the same function. Note that this last condition (ii) is equivalent to requiring that the local alignments must be defined without using the axiom of choice (since non-measurable sets are not constructible otherwise) and
    for each $D\sim D^\prime$, the number of distinct local alignments is countable. That is, the set $\{\alio\mid \omega\in\Omega\}$ is countable (i.e., for many choices of $\omega$ we get the same exact alignment function).
\end{enumerate}

\begin{theorem}\label{thm:main}
Let $M$ be a randomized algorithm that terminates with probability 1 and suppose the  number  of  random  variables  used  by $M$ can  be  determined from its output. If, for all pairs of adjacent databases $D\sim D^\prime$, there exist randomness alignment functions $\ali$ (or local alignment functions $\alio$ for all $\omega\in\Omega$ and $D\sim D^\prime$) that satisfy conditions 1 though 4 above,  then  $M$ satisfies $\ln(a/b)$-differential privacy.
\end{theorem}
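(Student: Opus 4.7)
The plan is to show $\PP(\sde) \leq (a/b)\PP(\sdep)$ for any measurable $E\subseteq \Omega$ and adjacent $D\sim D^\prime$, which gives $\ln(a/b)$-differential privacy. The central tool is the change-of-variables formula applied to the (piecewise) alignment function $\ali$, combined with the density-ratio bound from condition 3 and the Jacobian bound from condition 4.

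First I would reduce the local-alignment case to the global one. Given the countable partition $\Omega = \bigcup_i \Omega_i$ from condition 4, the local alignment $\alio$ agrees with a single function on each $\Omega_i$, so setting $\ali(H) = \alio(H)$ where $\omega = M(D,H)$ yields a function that is well-defined and piecewise-differentiable on the corresponding partition of the domain $\bigcup_i S_{D:\Omega_i}$ (these sets are measurable because we assumed the number of random variables used by $M$ is determined by its output, so the event $\{M(D,H)\in \Omega_i\}$ is a measurable event in a finite-dimensional space once we condition on the prefix length). The one-to-one property from condition 2 ensures $\ali(\sde) \subseteq \sdep$ and that the restriction of $\ali$ to each piece is invertible.

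Next I would run the key computation. Write
\begin{align*}
\PP(\sdep) &\geq \PP(\ali(\sde)) = \int_{\ali(\sde)} f(H^\prime)\, dH^\prime,
\end{align*}
split the domain according to the partition, and on each piece apply the standard change of variables $H^\prime = \ali(H)$:
\begin{align*}
\PP(\sdep) &\geq \sum_i \int_{\sde \cap S_{D:\Omega_i}} f(\ali(H))\,\lvert\det J_{\ali}(H)\rvert\, dH.
\end{align*}
Using $f(\ali(H)) \geq f(H)/a$ from the alignment-cost bound (condition 3) and $\lvert\det J_{\ali}\rvert \geq b$ from condition 4, the right side is at least $(b/a)\int_{\sde} f(H)\, dH = (b/a)\PP(\sde)$. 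Rearranging gives the required inequality. The discrete-noise case is even simpler: there is no Jacobian factor, condition 4 is vacuous with $b=1$, and summation replaces integration, but the density-ratio argument is identical.

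The main obstacle will be the bookkeeping around the partition and measurability. Specifically, I need to be careful that the piecewise $\ali$ is measurable (so the image $\ali(\sde)$ is measurable), that the Jacobian exists almost everywhere on each piece (condition 4 only guarantees continuity off a measure-zero set), and that the countable union of measure-zero exceptional sets remains measure zero so that the inequalities above hold up to a null set. The termination-with-probability-one assumption together with the ``number of random variables is determined by the output'' assumption are what let me truncate $H$ to a finite-dimensional vector on each piece, which is essential for invoking the standard change-of-variables formula. Once these technicalities are handled, the inequality chain above delivers $\ln(a/b)$-differential privacy.
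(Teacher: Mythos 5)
Your proposal is correct and follows essentially the same route as the paper: partition by which alignment function applies (equivalently, by the $\Omega_i$ from condition~4), apply the change-of-variables formula to each piece, and invoke the density-ratio bound $f(H)/f(\ali(H))\le a$ together with $\lvert\det J_{\ali}\rvert\ge b$. The only cosmetic difference is that you run the inequality chain starting from $\PP(\sdep)$ and lower-bound it, whereas the paper starts from $\PP(\sde)$ and upper-bounds it after changing variables; these are the same computation read in opposite directions, and your remarks on measurability and truncation to finite dimensions correctly identify the technical points the paper handles implicitly.
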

\begin{proof} 
We need to show that for all $D\sim D'$ and $E\subseteq \Omega$, $\PP(\sde) \leq (a/b) \PP(\sdep)$.

First we note that if we have a randomness alignment $\ali$, we can define corresponding local alignment functions as follows $\alio(H)=\ali(H)$ (in other words, they are all the same). The conditions on local alignments are a superset of the conditions on randomness alignments, so for the rest of the proof we work with the $\alio$.

Let $\phi_1, \phi_2, \ldots $ be the distinct local alignment functions (there are countably many of them by Condition 4).
Let $E_i = \set{\omega\in E \mid \alio = \phi_i}$. 
By Conditions 1 and 2 we have that for each $\omega\in E_i$, $\phi_i$ is one-to-one on $\sdo$ and $\phi_i(\sdo)\subseteq \sdop $.
Note that $\sdei = \cup_{\omega\in E_i}\sdo$ and $\sdeip = \cup_{\omega\in E_i}\sdop$. Furthermore, the sets $\sdo$ are pairwise disjoint for different $\omega$ and the sets $\sdop$ are pairwise disjoint for different $\omega$. It follows that $\phi_i$ is one-to-one on $\sdei$ and $\phi_i(\sdei)\subseteq \sdeip$.
Thus for any $H^\prime\in \phi_i(\sdei)$ there exists $H\in \sdei$ such that $H=\phi_i^{-1}(H^\prime)$.
By Conditions 3 and 4, we have $\frac{f(H)}{f(\phi_i(H))} = \frac{f(\phi_i^{-1}(H'))}{f(H')} \leq a$ for all $H\in \sdei$, and $\abs{\det J_{\phi_i} } \geq b$ (except on a set of measure 0).
Then the following is true:
\begin{align*}
    \PP(\sdei) &= \int_{\sdei}f(H)~dH \\
    &= \int_{\phi_i(\sdei)} f(\phi_i^{-1}(H^\prime))~\frac{1}{\abs{\det J_{\phi_i}}}~dH'\\
    &\leq \int_{\phi_i(\sdei)} a f(H')\frac{1}{b}~dH'
    = \frac{a}{b} \int_{\phi_i(\sdei)} f(H')~dH' \\
    &\leq \frac{a}{b} \int_{\sdeip} f(H^\prime)~dH^\prime = \frac{a}{b} \PP(\sdeip).
\end{align*}
The second equation is the change of variables formula in calculus. The last inequality follows from the containment  $\phi_i(\sdei) \subseteq \sdeip$ and the fact that the density $f$ is nonnegative. In the case that $H$ is discrete, simply replace the density $f$ with a probability mass function, change the integral into a summation, ignore the Jacobian term and set $b=1$.
Finally, since $E=\cup_iE_i$ and $E_i\cap E_j =\emptyset$ for $i\ne j$, we conclude that
\[\PP( \sde)=  \sum_i\PP( \sdei) \leq \frac{a}{b} \sum_i \PP( \sdeip) = \frac{a}{b} \PP(\sdep).\]
\end{proof}

We now present the proof of \textbf{Lemma \ref{lem:alignmentbound}.}

\begin{proof} Let $\alio(H)= H'=(\eta'_1, \eta'_2, \ldots)$. By acyclicity there is some permutation $\pi$ under which $\eta_{\pi(1)} = \eta'_{\pi(1)} - c$ where $c$ is some constant depending on $D\sim D'$ and $\omega$. Thus $\eta_{\pi(1)}$ is uniquely determined by $H'$. Now (as an induction hypothesis) assume $\eta_{\pi(1)}, \ldots, \eta_{\pi(j-1)}$ are uniquely determined by $H'$ for some $j > 1$, then $\eta_{\pi(j)} = \eta_{\pi(j)}'-\alpsi{j}(\eta_{\pi(1)}, \ldots, \eta_{\pi(j-1)})$, so $\eta_{\pi(j)}$ is also uniquely determined by $H'$. Thus by strong induction $H$ is uniquely determined by $H'$, i.e., $\alio$ is one-to-one. 
It is easy to see that with this ordering, $J_{\alio}$ is an upper triangular matrix with $1$'s on the diagonal. Since permuting variables doesn't change  $\abs{\det J_{\alio}}$, we have $\abs{\det J_{\alio}} = 1$ since that is the determinant of upper triangular matrices.
\hl{Furthermore, (recalling the definition of the cost of $\alio$), clearly }
\[
\hl{\ln \frac{f(H)}{f(\phi_\omega(H))} = \sum_i \ln \frac{f_i(\eta_i)}{f_i(\eta_i^\prime)} \leq \sum_i | \eta_i-\eta^\prime_i|/\alpha_i \leq \epsilon}
\]
\hl{The first inequality follows from Condition \ref{conditioniii} of Lemma \ref{lem:alignmentbound} and the second from Condition \ref{conditioniv}.}
\end{proof}

\section{Conclusions and Future Work}\label{sec:conc}
In this paper we introduced the Adaptive Sparse Vector with Gap and Noisy Top-K with Gap mechanisms, which were based on the observation that the classical Sparse Vector and Noisy Max mechanisms could release additional information at no cost to privacy.  We also provided applications of this free gap information.

Future directions include using this technique to design additional mechanisms as well as finding new applications for these mechanisms in fine-tuning the accuracy of data release algorithms that use differential privacy.

\section*{Acknowledgments}
This work was supported by NSF Award CNS-1702760.

\balance


\bibliographystyle{abbrv}
\bibliography{diffpriv}  


\iffullversion
\pagebreak
\begin{appendix}
\section{Proofs}

\subsection{Probability of Ties Among n Queries with the Discrete Laplace Distribution}
A discretized Laplace distribution whose support ranges over multiples of some base $\gamma$ has the following probability mass function:
$f(k; \epsilon) = \frac{1-e^{-\gamma\epsilon}}{1+e^{-\gamma\epsilon}} e^{-\epsilon |k|}$ (for $k=0, \pm \gamma, \pm 2\gamma,\dots$). 

We will first consider the probability of a tie between two queries and then use the union bound over all pairs of queries.

Suppose $\eta_1$ and $\eta_2$ are two i.i.d zero mean discrete Laplace random variables with scale $1/\epsilon$ and base $\gamma$.
Without loss of generality, let $ q_1 - q_2 = m\gamma \geq 0$. Then the probability that $q_1+\eta_1 = q_2 + \eta_2$ is: 
\begingroup
\allowdisplaybreaks
\begin{align*}
    &\PP(q_1+\eta_1 =q_2+ \eta_2) = \sum_{\ell\in\ZZ} \PP(\eta_1 =  \gamma \ell)\PP(\eta_2 =  (\ell+m)\gamma)\\%
    &=\frac{(1-e^{-\gamma\epsilon})^2}{(1+ e^{-\gamma\epsilon})^2} \sum_{\ell\in \ZZ} e^{-\epsilon\gamma\abs{\ell}} e^{-\epsilon\gamma\abs{\ell+m}}\\%
    &=\frac{(1-e^{-\gamma\epsilon})^2}{(1+ e^{-\gamma\epsilon})^2}  \Big( \sum_{\ell=-\infty}^{-m} e^{\epsilon\gamma \ell}e^{\epsilon\gamma(\ell+m)}\\%
     &\qquad + \sum_{\ell=-m+1}^{0} e^{\epsilon\gamma \ell}e^{-\epsilon\gamma(\ell+m)} + \sum_{\ell=1}^{\infty} e^{-\epsilon\gamma \ell}e^{-\epsilon\gamma(\ell+m)} \Big)\\
    &=\frac{(1-e^{-\gamma\epsilon})^2}{(1+ e^{-\gamma\epsilon})^2} e^{-\epsilon\gamma m} \big(\frac{1}{1-e^{-2\epsilon\gamma}} + m + \frac{e^{-2\epsilon\gamma}}{1-e^{-2\epsilon\gamma}}\big)\\
    &= \frac{(1-e^{-\gamma\epsilon})^2}{(1+ e^{-\gamma\epsilon})^2} e^{-\epsilon\gamma m} (\frac{1+e^{-2\epsilon\gamma}}{1 - e^{-2\epsilon\gamma}} + m)\\
    &\leq  \left(1-e^{-\gamma\epsilon}\right)^2e^{-\epsilon\gamma m}\left(\frac{1}{1-e^{-2\epsilon\gamma}}+m\right)\\
    &\leq  \left(1-e^{-\gamma\epsilon}\right)^2e^{-\epsilon\gamma m}\left(\frac{1}{1-e^{-\epsilon\gamma}}+m\right)\\
    &\leq (1-e^{-\gamma\epsilon}) +  \left(1-e^{-\gamma\epsilon}\right)^2e^{-\epsilon\gamma m}m\\
    &\leq \gamma\epsilon +  \left(1-e^{-\gamma\epsilon}\right)^2e^{-\epsilon\gamma m}m\quad\text{(since $1-e^{-x}\leq x$ for $x\geq -1$)}\\
    &\leq \gamma\epsilon + (\gamma\epsilon)^2me^{-\epsilon\gamma m} =\gamma\epsilon (1 + \gamma\epsilon m e^{-\gamma\epsilon m})\\
    &\leq \gamma\epsilon(1+e^{-1}) \quad\text{(since $xe^{-x}$ is maximized at $x=1$)}
\end{align*}
\endgroup
Since there are $n$ queries, we can conservatively estimate the probability of a tie as the probability that any pair of $n$ items has a tie. Using the union bound, we get the probability of a tie is at most $n^2 \gamma\epsilon$. In floating point, we expect a Laplace distribution to be implemented using a Discrete Laplace with $\gamma$ being close to machine epsilon, which for double-precision floating point numbers is around $2^{-52}$.




\subsection{Proof of Theorem \ref{thm:blue} (BLUE)}
\begin{proof}
Let $q_1, \ldots, q_k$ be the \emph{true} answers to the $k$ queries selected by Noisy-Top-K-with-Gap algorithm. Let $\alpha_i$ be the estimate of $q_i$ using Laplace mechanism, and $g_i$ be the estimate of the gap between $q_i$ and $q_{i+1}$ from Noisy-Top-K-with-Gap.

Recall that $\alpha_i = q_i + \xi_i$ and $g_i = q_i + \eta_i - q_{i+1} - \eta_{i+1}$ where $\xi_i$ and $\eta_i$  are independent Laplacian random variables. Assume without loss of generality that $\var(\xi_i)=\sigma^2$ and $\var(\eta_i) = \lambda\sigma^2$. 
Write in vector notation 
\[
\vq = \begin{bmatrix}q_1 \\ \vdots \\ q_k \end{bmatrix}, 
\vxi = \begin{bmatrix}\xi_1 \\ \vdots \\ \xi_k \end{bmatrix}, 
\veta = \begin{bmatrix}\eta_1 \\ \vdots \\ \eta_k \end{bmatrix}, 
\valpha = \begin{bmatrix}\alpha_1 \\ \vdots \\ \alpha_k \end{bmatrix}, 
\vg = \begin{bmatrix}g_1 \\ \vdots \\ g_{k-1} \end{bmatrix},
\]
then $\valpha = \vq + \vxi$ and $\vg = N(\vq+\veta)$ where
\[N = \begin{bmatrix}
\makebox[1em]{$1$} & \makebox[1em]{$-1$} & \makebox[1em]{} & \makebox[1em]{}\\
\makebox[1em]{} & \makebox[1em]{$\ddots$} & \makebox[1em]{$\ddots$}& \makebox[1em]{}\\
\makebox[1em]{} & \makebox[1em]{} &\makebox[1em]{$1$} & \makebox[1em]{$-1$} 
\end{bmatrix}_{(k-1)\times k}.\]

Our goal is then to find the \emph{best linear unbiased estimate} (BLUE) $\vbeta$ of $\vq$ in terms of $\valpha$ and $\vg$. In other words, we need to find a $k\times k$ matrix $X$ and a $k\times (k-1)$ matrix $Y$ such that 
\begin{equation}\label{eq:blue1}
  \vbeta =X\valpha + Y\vg 
\end{equation}
 with $E(\norm{\vbeta - \vq}^2) $ as small as possible. Unbiasedness implies that $\forall \vq, E(\vbeta) =  X\vq + YN\vq = \vq$. Therefore $X+YN = I_k$ and thus
\begin{equation}\label{eq:blue3}
  X = I_k - YN.  
\end{equation} 
 Plugging this into \eqref{eq:blue1}, we have $\vbeta = (I_k - YN)\valpha + Y\vg = \valpha -Y(N\valpha - \vg)$. Recall that $\valpha = \vq + \vxi$ and $\vg = N(\vq+\veta)$, we have $N\valpha -\vg = N(\vq + \vxi - \vq - \veta) = N(\vxi - \veta)$. Thus
 \begin{equation}\label{eq:blue2}
     \vbeta = \valpha - YN(\vxi - \veta).
 \end{equation}
Write $\vtheta = N(\vxi - \veta)$,  
then we have $ \vbeta - \vq = \valpha - \vq  - Y\vtheta = \vxi - Y\vtheta$. Therefore, finding the BLUE is equivalent to solving the  optimization problem $Y = \arg\min \Phi$ where
\begin{align*}
    \Phi &=  E(\norm{\vxi - Y\vtheta}^2) =   E((\vxi - Y\vtheta)^T(\vxi - Y\vtheta))\\
    &=  E(\vxi^T\vxi - \vxi^TY\vtheta -\vtheta^TY^T\vxi + \vtheta^TY^TY\vtheta)
\end{align*}
Taking the partial derivatives of $\Phi$ w.r.t $Y$, we have 
\begin{align*}
    \frac{\partial \Phi}{\partial Y} &= E(\boldsymbol{0} - \vxi\vtheta^T -\vxi\vtheta^T + Y(\vtheta\vtheta^T + \vtheta\vtheta^T))
\end{align*}
By setting $\frac{\partial \Phi}{\partial Y} = 0$ we have $YE(\vtheta\vtheta^T) = E(\vxi\vtheta^T)$ thus
\begin{equation}\label{eq:blue4}
Y = E(\vxi\vtheta^T) E(\vtheta\vtheta^T)^{-1}.
\end{equation}
Recall that $(\vxi\vtheta^T)_{ij} = \xi_i(\xi_j -\xi_{j+1} -\eta_j + \eta_{j+1} )$, we have 
\[E(\vxi\vtheta^T)_{ij} = \begin{cases}
E(\xi_i^2) = \var(\xi_i) = \sigma^2  & i = j \\
-E(\xi_i^2) = -\var(\xi_i) = -\sigma^2  & i = j+1 \\
0 &\text{otherwise}
\end{cases}
\]
Hence
\[ E(\vxi\vtheta^T) = \sigma^2\begin{bmatrix}
1 & & \\
-1 & \ddots & \\
& \ddots &1 \\
& & -1
\end{bmatrix}_{k\times(k-1)} =  \sigma^2N^T.\]
Similarly, we have 
\begin{align*}
    (\vtheta\vtheta^T)_{ij} &= (\xi_i-\xi_{i+1} - \eta_i + \eta_{i+1})(\xi_j - \xi_{j+1} - \eta_j + \eta_{j+1}) \\
    &= \xi_i\xi_j + \xi_{i+1}\xi_{j+1} - \xi_{i}\xi_{j+1} -\xi_{i+1}\xi_{j} \\
    &\hspace{1em}+\eta_i\eta_j + \eta_{i+1}\eta_{j+1} - \eta_{i}\eta_{j+1} -\eta_{i+1}\eta_{j} \\
    &\hspace{1em}-(\xi_i -\xi_{i+1})(\eta_j-\eta_{j+1}) -(\eta_i -\eta_{i+1})(\xi_j-\xi_{j+1})
\end{align*}
Thus
\[E(\vtheta\vtheta^T)_{ij}\! =\! \begin{cases}
E(\xi_i^2\! +\! \xi_{i+1}^2\! +\! \eta_i^2\! +\! \eta_{i+1}^2) = 2(1\!+\!\lambda)\sigma^2  & i = j \\
E(-\xi_i^2 - \eta_i^2) = -(1\!+\!\lambda)\sigma^2  & i = j\!+\!1 \\
E(-\xi_j^2 -\eta_j^2) = -(1\!+\!\lambda)\sigma^2  & i = j\!-\!1 \\
0 &\text{otherwise}
\end{cases}
\]
Hence
\[
E(\vtheta\vtheta^T) =(1\!+\!\lambda)\sigma^2 \begin{bmatrix}
2 & -1 & & & &\\
-1 & 2 & -1 & & &\\
& \ddots & \ddots &  \ddots &  & \\
& & -1 & 2 & -1 \\
& & & -1 & 2 &
\end{bmatrix}_{(k-1)\times (k-1)}.
\]
It can be directly computed that $E(\vtheta\vtheta^T)^{-1} $ is a symmetric matrix whose lower trianguilar part is 
\[ 
 \frac{1}{k(1\!+\!\lambda)\sigma^2}\!\begin{bmatrix}
(k\!-\!1)\cdot 1 & \cdots & \cdots & \cdots & \cdots   \\
(k\!-\!2)\cdot 1 & (k\!-\!2)\cdot 2 & \cdots & \cdots & \cdots \\
(k\!-\!3)\cdot 1 & (k\!-\!3)\cdot 2 & (k\!-\!3)\cdot 3 & \cdots & \cdots  \\
\vdots & \vdots & \vdots & \ddots &  \vdots \\
1\cdot 1 & 1\cdot 2 & 1\cdot 3 & \cdots & 1\cdot (k\!-\!1)  \\
\end{bmatrix}
\]
i.e., $E(\vtheta\vtheta^T)^{-1}_{ij} = E(\vtheta\vtheta^T)^{-1}_{ji} = \frac{1}{k(1+\lambda)\sigma^2}\cdot (k-i)\cdot j$ for all $1\leq i\leq j \leq k-1$. 
Therefore, $Y = E(\vxi\vtheta^T) E(\vtheta\vtheta^T)^{-1}= $ 
\[
\frac{1}{k(1\!+\!\lambda)}\left(\begin{bmatrix}
k\!-\!1 & k\!-\!2 & \cdots &1 \\
k\!-\!1 & k\!-\!2 & \cdots &1 \\
k\!-\!1 & k\!-\!2 & \cdots &1 \\
\vdots & \vdots & \ddots &\vdots\\
k\!-\!1 & k\!-\!2 & \cdots & 1 \\
\end{bmatrix} - 
\begin{bmatrix}
0 & 0 & \cdots &0 \\
k & 0 & \cdots &0 \\
k & k & \cdots &0 \\
\vdots & \vdots & \ddots &0 \\
k & k & \cdots &k \\
\end{bmatrix}
\right)_{k\times (k-1)}
\]
Hence 
\[
X = I_k - YN = \frac{1}{k(1\!+\!\lambda)}\begin{bmatrix}
1\!+\!k\lambda  & 1 & \cdots & 1 \\
1 & 1\!+\!k\lambda  & \cdots & 1 \\
\vdots & \vdots & \ddots & \vdots \\
1 & 1 & \cdots & 1\!+\!k\lambda  \\
\end{bmatrix}_{k\times k}.
\]
\end{proof}

\subsection{Proof of Corollary \ref{cor:blue_var}}
Recall that $\alpha_i = q_i + \xi_i$ and $g_i = q_i + \eta_i - q_{i+1} - \eta_{i+1}$ where $\xi_i$ and $\eta_i$  are independent Laplacian random variables. Assume without loss of generality that $\var(\xi_i)=\sigma^2$ and $\var(\eta_i) = \lambda\sigma^2$ as before.
From the matrices $X$ and $Y$ in Theorem~\ref{thm:blue} we have that $\beta_i = \frac{x_i + y_i}{k(1+\lambda)}$ where
\begin{align*}
    x_i &= \alpha_1 + \cdots + (1+k\lambda) \alpha_i + \cdots + \alpha_k\\
    &= (q_1+\xi_1) + \cdots + (1+k\lambda) (q_i+\xi_i) + \cdots + (q_k +\xi_k)
    \shortintertext{and}
    y_i &= -g_1 -2g_2 - \cdots - (i-1)g_{i-1} \\
    &\qquad + (k-i) g_i + \ldots + 2g_{k-2} + g_{k-1}\\
    &= -(q_1 + \eta_1) - (q_2+\eta_2) -\cdots -  (q_{i-1} + \eta_{i-1}) \\
    &\qquad + (k-1)(q_i + \eta_i) - (q_{i+1} + \eta_{i+1}) - \cdots - (q_{k} + \eta_k).
\end{align*}
Therefore
\begin{align*}
    \var(x_i) &= \sigma^2 + \cdots + (1+k\lambda)^2\sigma^2 + \cdots + \sigma^2\\
    &= ( k^2\lambda^2 + 2k\lambda + k)\sigma^2 \\
    \var(y_i) &= \lambda\sigma^2+ \cdots + (k-1)^2 \lambda\sigma^2 + \cdots + \lambda\sigma^2\\
    &= (k^2 - k)\lambda\sigma^2
    \shortintertext{and thus}
    \var(\beta_i) &= \frac{\var(x_i) + \var(y_i)}{k^2(1+\lambda)^2} =\frac{1 + k\lambda}{k+k\lambda}\sigma^2.
\end{align*}
Since $\var(\alpha_i) = \var(\xi_i) = \sigma^2$, we have 
\[\frac{\var(\beta_i)}{\var(\alpha_i)} = \frac{1 + k\lambda}{k+k\lambda}.\]

\subsection{Proof of Lemma~\ref{lem:confint}}
The density function of $\eta_i - \eta$ is 
\begin{align*}
    f_{\eta_i-\eta} (z) &= \int_{-\infty}^\infty  f_{\eta_i}(x) f_{\eta}(x-z)~dx \\
    &=\frac{\epsilon_0\epsilon_*}{4} \int_{-\infty}^\infty e^{-\epsilon_*\abs{x}} e^{-\epsilon_0\abs{x-z}}~dx.
\end{align*}
First  consider the case $\epsilon_0\neq \epsilon_*$. When $z\geq 0$, we have
\begin{align*}
     f_{\eta_i-\eta} (z) &= \frac{\epsilon_0\epsilon_*}{4} \int_{-\infty}^\infty e^{-\epsilon_*\abs{x}} e^{-\epsilon_0\abs{x-z}}~dx\\
    &= \frac{\epsilon_0\epsilon_*}{4} \Big(  \int_{-\infty}^0 e^{\epsilon_* x} e^{\epsilon_0(x-z)}~dx ~+ \\
    &\quad \int_{0}^z e^{-\epsilon_* x} e^{\epsilon_0(x-z)}~dx
     + \int_{z}^\infty e^{-\epsilon_* x} e^{-\epsilon_0(x-z)}~dx\Big)\\ %
    &= \frac{\epsilon_0\epsilon_*}{4} \Big( \frac{e^{-\epsilon_0z}}{\epsilon_0+\epsilon_*} + \frac{ e^{-\epsilon_*z} - e^{-\epsilon_0z}}{\epsilon_0-\epsilon_*} + \frac{e^{-\epsilon_*z}}{\epsilon_0+\epsilon_*} \Big)\\ %
    &= \frac{\epsilon_0\epsilon_* (\epsilon_0 e^{-\epsilon_*z} - \epsilon_* e^{-\epsilon_0z})}{2(\epsilon_0^2 - \epsilon_*^2)}
\end{align*}
Thus by symmetry we have for all $z\in \RR$
\[f_{\eta_i-\eta} (z) = \frac{\epsilon_0\epsilon_* (\epsilon_0 e^{-\epsilon_*\abs{z}} - \epsilon_* e^{-\epsilon_0\abs{z}})}{2(\epsilon_0^2-\epsilon_*^2)} \] and 
\begin{align*}
    \PP(\eta_i - \eta \geq -t ) &= \int_{-t}^{\infty} f_{\eta_i-\eta} (z)~dz =  \int_{-t}^{0} f_{\eta_i-\eta} (z)~dz  + \frac{1}{2} \\
    &= 1 - \frac{\epsilon_0^2 e^{-\epsilon_*t} - \epsilon_*^2e^{-\epsilon_0t}}{2(\epsilon_0^2 - \epsilon_*^2)}.
\end{align*}
Now if $\epsilon_0 = \epsilon_*$, then by similar computations we have
\[f_{\eta_i-\eta} (z) = (\frac{\epsilon_0}{4} + \frac{\epsilon_0^2\abs{z}}{4})e^{-\epsilon_0\abs{z}}  \] and 
\begin{align*}
    \PP(\eta_i - \eta \geq -t ) 
    &= 1 - (\frac{2+\epsilon_0t}{4})e^{-\epsilon_0t}.
\end{align*}

\end{appendix}
\fi

\end{document}